\providecommand{\dif}{\mathop{}\!\mathrm d}
\providecommand{\hide}[1]{}
\theoremstyle{plain}
\newtheorem{thm}{Theorem}%[section]
\newcounter{claimcounter}
\numberwithin{claimcounter}{thm}
\newenvironment{claim}{\stepcounter{claimcounter}{\par\noindent\underline{Claim \theclaimcounter:}}}{}
\newcounter{claimproofcounter}
\numberwithin{claimproofcounter}{thm}
\newenvironment{claimproof}{\stepcounter{claimproofcounter}{\par\noindent\underline{Proof of Claim \theclaimproofcounter:}}}{}
\theoremstyle{definition}
\theoremstyle{remark}
\algnewcommand{\LineComment}[1]{\Statex \(\triangleright\) #1}
\newcommand{\rpm}{\raisebox{.2ex}{$\scriptstyle\pm$}}
\newcommand{\vct}[1]{\boldsymbol{#1}} % vector
\newcommand{\mat}[1]{\boldsymbol{#1}} % matrix
\newcommand{\field}[1]{\mathbb{#1}}
\newcommand{\R}{\field{R}} % real domain
\newcommand{\T}{^{\textrm T}} % transpose
\newcommand{\ProbOpr}[1]{\mathbb{#1}}
\newcommand{\expect}[2]{%
\ifthenelse{\equal{#2}{}}{\ProbOpr{E}_{#1}}
{\ifthenelse{\equal{#1}{}}{\ProbOpr{E}\left[#2\right]}{\ProbOpr{E}_{#1}\left[#2\right]}}} % Expectation: syntax: E{1}{2} = E_1[2], E{}{2}=E[2], E{1}{} = E_1
\newcommand{\var}[2]{%
\ifthenelse{\equal{#2}{}}{\ProbOpr{VAR}_{#1}}
{\ifthenelse{\equal{#1}{}}{\ProbOpr{VAR}\left[#2\right]}{\ProbOpr{VAR}_{#1}\left[#2\right]}}} % Expectation: syntax: V{1}{2} = V_1[2], V{}{2}=V[2], V{1}{} = V_1
\DeclareMathOperator{\argmax}{arg\,max}
\DeclareMathOperator{\argmin}{arg\,min}
\newcommand{\vy}{\vct{y}}
\newcommand{\vkappa}{\vct{\kappa}}
\newcommand{\zeros}{\vct{0}}
\newcommand{\mI}{\mat{I}}
\newcommand{\mK}{\mat{K}}
\newcommand{\Vor}{\mathrm{Vor}}
\algnewcommand{\algorithmicgoto}{\textbf{go to}}%
\algnewcommand{\Goto}[1]{\algorithmicgoto~\ref{#1}}%
\newcommand{\ALGNAME}{{\sc{BOIDP}}\xspace}
\definecolor{darkgreen}{rgb}{0,0.5,0}
\definecolor{purple}{rgb}{1,0,1}
\newcommand{\kibitz}[2]{\ifnum\Comments=1\textcolor{#1}{#2}\fi}
\newcommand{\zw}[1]{\kibitz{purple}      {[ZW: #1]}}
\title{\LARGE \bf
Focused Model-Learning and Planning for \\Non-Gaussian Continuous State-Action Systems
}
\author{Zi Wang \quad Stefanie Jegelka \quad Leslie Pack Kaelbling \quad Tom\'as Lozano-P\'erez% <-this % stops a space
\thanks{Computer Science and Artificial Intelligence Laboratory, Massachusetts Institute of Technology, 77 Massachusetts Ave., Cambridge, MA 02139.
        {\tt\small \{ziw,stefje,lpk,tlp\}@csail.mit.edu}}
}
\date{\vspace{-5ex}}
\renewcommand{\ALG@beginalgorithmic}{\small}
\begin{document}
 \sloppy

\maketitle
\begin{abstract}
\normalsize
We introduce a framework for model learning and planning in stochastic domains with continuous state and action spaces and non-Gaussian transition models. It is efficient because (1) local models are estimated only when the planner requires them; (2) the planner focuses on the most relevant states to the current planning problem; and (3) the planner focuses on the most informative and/or high-value actions. Our theoretical analysis shows the validity and asymptotic optimality of the proposed approach. Empirically, we demonstrate the effectiveness of our algorithm on a simulated multi-modal pushing problem.

%Our theoretical analysis shows that %the expected difference between the optimal value function of the original problem and the value of the policy we compute vanishes sub-linearly in the number of actions we test, under mild assumptions. 
%the expected value function computed by our approach converges to the optimal value function of the original problem. We show the empirical success of our algorithm for a stochastic pushing problem in simulation,  Focused planning.%given more than a million data points, 

\end{abstract}

\section{Introduction}
\label{sec:intro}

Most real-world domains are sufficiently complex that it is difficult
to build an accurate deterministic model of the effects of actions.
Even with highly accurate actuators and sensors, stochasticity still widely appears in basic manipulations, especially non-prehensile ones~\cite{yu2016more}.  The
stochasticity may come from inaccurate execution of actions as well as
from lack of detailed information about the underlying world state. For
example, 
rolling a die is a deterministic process that depends
on the forces applied, air resistance, etc.; however, we are
not able to model the situation sufficiently accurately to plan
reliable actions, nor to execute them repeatably if we could plan them.
We can plan using a stochastic model of the system, but
in many situations, such as rolling dice or pushing a can shown in Fig.~\ref{fig:intro}, the
stochasticity %cannot be well
is not modeled well by 
additive single-mode Gaussian noise, and a more sophisticated model class is necessary. %We need a more
%sophisticated model class.
\hide{
Our framework is modular, separating model-learning from planning. 
 Instead of trying to construct a
highly accurate physical simulator that will inevitably 
make inaccurate assumptions about the environment and the interaction
between robots and objects, e.g. Coulomb friction law and uniform
pressure distribution, w We propose a new planning algorithm, in which the non-Gaussian transition models are estimated on the fly from data. }\hide{ We estimate a stochastic model of the robot's
interaction with the environment from data. Then, to make plans using
the learned models, we propose a novel planning algorithm for non-Gaussian
continuous state-action systems.} 
%Our framework is modular, separating model-learning from planning.
%
%We learn a transition model and generate actions sequences by planning, which 

In this paper, we address the problem of learning and planning for
non-Gaussian stochastic systems in the practical setting
of continuous state and action spaces. Our framework learns transition models that can be used for planning to achieve
different objectives in the same domain, as well as to be potentially
transferred to related domains or even different types of robots.
This strategy is in contrast to most reinforcement-learning
approaches, which build the objective into the structure being
learned.  In addition, rather than constructing a single monolithic
model of the entire domain which could be difficult to represent, our method uses a memory-based learning
scheme, and computes localized models on the fly, only when the
planner requires them. To avoid constructing models that do not contribute to improving the policy, the planner should focus only
on states relevant to the current planning problem, and actions that
can lead to high reward.

%To complement the learning strategy, 
We propose a closed-loop planning algorithm that applies to stochastic
continuous state-action systems with arbitrary transition models. It
is assumed that the transition models are represented by a function
that may be expensive to evaluate. Via two important steps, we focus
the computation on the current problem instance, defined by the
starting state and goal region. To focus on relevant states, we use
real time dynamic programming (RTDP)~\cite{barto1995learning} on a set
of states strategically sampled by a rapidly-exploring random tree
(RRT)~\cite{lavalle1998rapidly,huynh2016incremental}. To focus
selection of actions from a continuous space, we develop a new batch
Bayesian optimization (BO) technique that selects and tests, in
parallel, action candidates that will lead most quickly to a
near-optimal answer.

% are the most informative to the Q
% function and/or are potentially high-value.

\hide{

Given a stochastic model of state transitions in the continuous
state-action space, we need a strategy for planning.   
Recently, \cite{huynh2016incremental} introduced the incremental Markov
Decision process algorithm (iMDP), which incrementally generates a
sequence of Markov Decision Processes to approximate the original
continuous-time/space problem. iMDP assumes the process dynamics can
be described by a Wiener process, in which transition noise is
Gaussian with execution-time-dependent variance. It was shown theoretically and empirically for stochastic LQR problems that the value function of iMDP converges to the true value function. However, it is very challenging to apply iMDP to just a bit more complicated systems with non-Gaussian and non-closed-form dynamics.

Building on the partial success of iMDP, }
\begin{figure}[t]
        \centering
        \includegraphics[width=0.48\textwidth]{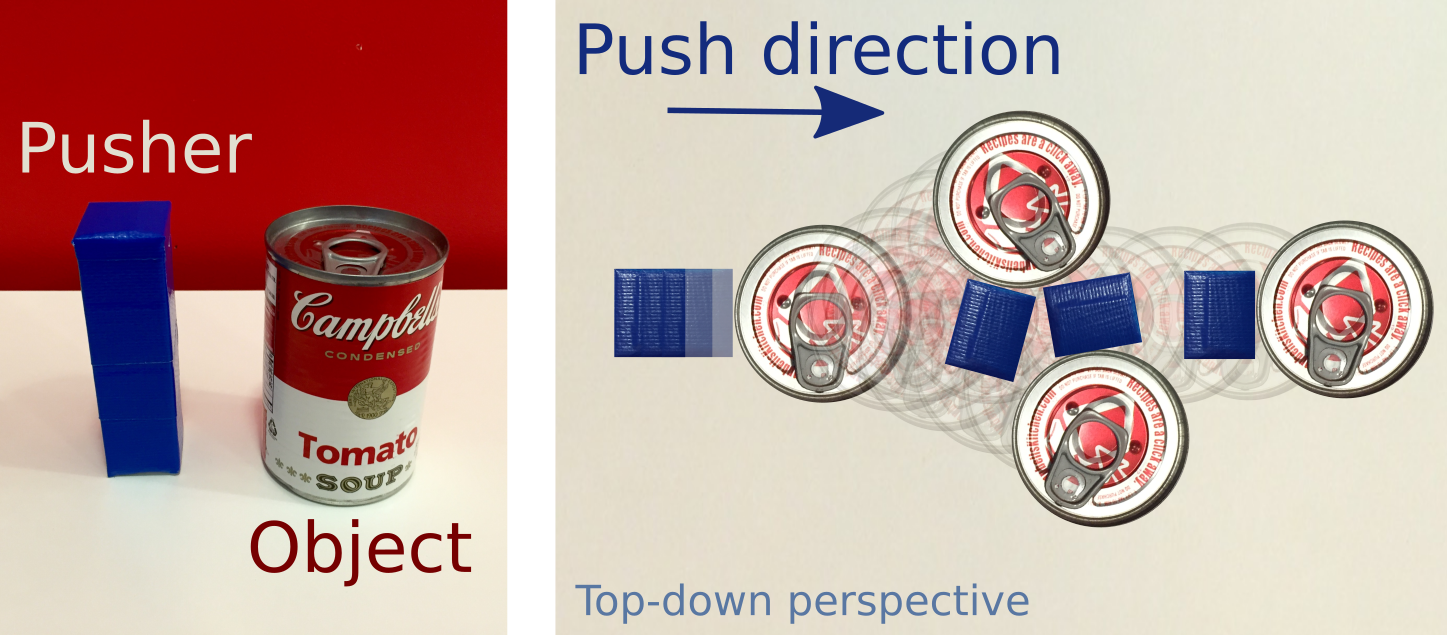}
        \caption{A quasi-static pushing
          problem: the pusher has a velocity controller with low
          gain, resulting in non-Gaussian transitions. We show
          trajectories for object and pusher resulting from the same
          push velocity.}
        \label{fig:intro} 
\end{figure}

We show theoretically that the expected accumulated difference between the optimal
value function of the original problem and the value of the policy we
compute vanishes sub-linearly in the number of actions we test, under
mild assumptions. Finally we evaluate our approach empirically on a
simulated multi-modal pushing problem, and demonstrate the effectiveness and efficiency of the proposed algorithm.

% and
% demonstrate that our approach is able to complete both learning and
% planning on a simulated multi-modal pushing problem within minutes,
% which is several orders of magnitude faster than an approach based on
% uniform grid discretization.

% We formally introduce the problem in
% Section~\ref{sec:background}, and describe the combined learning and
% planning method in~\ref{sec:imdp}.  In section~\ref{sec:exp}, we
% illustrate our approach on the 
% problem of learning to push a cylindrical object to a goal region
% with a point robot.  This domain is difficult due to non-Gaussian
% transition noise (the object will sometimes roll off to the left and
% sometimes to the right) and the fact that pushing is non-holonomic.

%%% Local Variables:
%%% mode: latex
%%% TeX-master: t
%%% End:

\section{Related Work}
\label{sec:related}

\paragraph{Learning}
The class of problems that we address may be viewed as
reinforcement-learning (RL) problems in observable continuous
state-action spaces.  It is possible to address the problem through
model-free RL, which estimates a value function or policy for
a specific goal directly through experience.  Though the majority of
work in RL addresses domains with discrete action spaces, there has
been a thread of relevant work on value-function-based RL in
continuous action spaces~\cite{Gullapalli94, Baird93, Prokhorov97,
  vanHasselt07,Nichols15}.  An alternative approach is to do direct
search in the space of
policies~\cite{deisenroth2011pilco,jakab2012reinforcement}.

In continuous state-action spaces, model-based RL, where
a model is estimated %and used 
to optimize a policy, can often be more
effective.  Gaussian processes (GP) can help to learn the dynamics~\cite{deisenroth2008model,rottmann2009adaptive, nguyen2011model},
which can then be used by GP-based dynamic programming~\cite{deisenroth2009gaussian,rottmann2009adaptive} to determine a
continuous-valued closed-loop policy for the whole state space.  More details can be found in the excellent survey~\cite{ghavamzadeh2015bayesian}.

Unfortunately, the common assumption of i.i.d Gaussian noise on the
dynamics is restrictive and may not hold in
practice~\cite{yu2016more}, and the transition model can be
multi-modal. It may additionally be difficult to obtain a good GP
prior.  The basic GP model is can capture neither the
multi-modality nor the heteroscedasticity of the noise. While more
advanced GP algorithms may address these problems, they often suffer
from high computational
cost~\cite{titsias2011variational,yuan2009variational}.

Moldovan et al.~\cite{moldovan2015optimism}~addressed the problem of
multi-modality by using Dirichlet process mixture models (DPMMs) to
learn the density of the transition models. Their strategies for
planning were limited by deterministic assumptions, appropriate for
their domains of application, but potentially resulting in collisions
in ours.  Kopicki
et al.~\cite{kopicki2009prediction,kopicki2010prediction,kopicki2011learning}~addressed
the problem of learning to predict the behavior of rigid objects under
manipulations such as pushing, using kernel density estimation.
% Although effective for prediction, such
% models can be difficult to plan with, making them less suitable for
% our task.
In this paper, we propose an efficient planner that can work with
arbitrary, especially multi-modal stochastic models in continuous state-action spaces. Our learning method in the experiment resembles DPMMs but we estimate the density on the fly
when the planner queries a state-action pair. We were not able to
compare our approach with DPMMs because we found DPMMs not
computationally feasible for large datasets.
% (e.g. more than a million  data points). 
%Model learning papers (e.g. PZK for symbolic models with stochasticity, low-level model learning for forward/inverse models with deterministic/single-modal assumptions), and planning papers for PZK (Prada), low-level motion planning papers (PRM, RRT, RRT*, PRM*, stochastic PRM...)

%Exploration-style model-based RL papers...

%Jeremy's papers about kernel density estimation for prediction of pushing manipulations (they use less amount of data, so a regression sometimes suffices)...

% Model learning is a classic problem studied for years and the related
% literature is vast. For deterministic models,
% parametric/non-parametric regression is often used for different
% learning architectures~\cite{nguyen2011model}. While for the complex
% real world, a deterministic model does not always suffice, and many
% problems appear such as state representation and
% uncertainty. \cite{pasula2007learning}~introduced an algorithm (PZK)
% that learns the model of complex stochastic world dynamics with
% relational representations. Our work can potentially be integrated
% with PZK by parameterizing the relational rules with continuous
% variables. An open-loop planner was made available for PZK to handle
% the exploration~\cite{lang2010planning}. 

\paragraph{Planning}
We are interested in domains for which 
queries are made by specifying a starting state and a goal set, and in which the solution
to the given query can be described by a policy that covers only a
small fraction of the state space that the robot is likely to encounter. %as it traverses from the starting state to the goal region.  

Planning only in the fraction of the state-action space that the robot
is likely to encounter is, in general, very challenging.  Other
related work uses tree-based search
methods~\cite{weinstein2012bandit,mansley2011sample,yee2016monte}, where the
actions are selected by optimizing an optimistic heuristic. These
algorithms are impractical for our problem because of the exponential
growth of the tree and the lack of immediate rewards that can guide
the pruning of the
tree. %The closed-loop version~\cite{mansley2011sample} of their
          %planning %algorithm is even more computationally infeasible
          %as pointed out %by~\cite{weinstein2012bandit}.  

%Our planning method is closely related to the iMDP method
%of~\cite{huynh2016incremental}, which 
In contrast to the tree-search algorithms, iMDP~\cite{huynh2016incremental}, which is most related to our work, uses sampling techniques from
RRTs to create successively more accurate discrete MDP approximations
of the original continuous MDP, ultimately converging to the optimal
solution to the original problem.  Their method assumes the ability to solve the Bellman equation optimally (e.g. for a simple stochastic LQR problem), the availability of the backward transition models, and that the dynamics is modeled by a Wiener process, in which the transition noise is
Gaussian with execution-time-dependent variance. However, the assumptions are too restrictive to model our domains of
interest where the dynamics is non-closed-form, costly to evaluate,  non-reversible, and non-Gaussian. Furthermore, iMDP is designed for stochastic control problems with multiple starting states and a single goal, while we are interested in multiple start-goal pairs.
%iMDP may  find a policy over a larger subspace than may be necessary. 
%iMDP is, in turn, related to the stochastic motion 
%roadmap~\cite{Alterovitz07}, which also assumes Gaussian transition 
%uncertainty. 

Our work builds on the idea of constructing a sequence of MDPs from iMDP~\cite{huynh2016incremental}, and aims at practically resolving the challenges of state/action selection faced by both iMDP and tree-search-based planners~\cite{weinstein2012bandit}.

\paragraph{Bayesian optimization} There have been a number
of applications of BO in optimal control,
although to our knowledge, it has not been previously applied to action-selection in continuous-action MDPs.
BO has been used to find weights in a neural network controller~\cite{frean2008using}, to solve for the parameters of a hierarchical MDP~\cite{brochu2009}, and to address safe exploration in finite MDPs~\cite{turchetta2016safe}. To our knowledge, BO has not been previously applied to action-selection in continuous-action MDPs.

% applied Bayesian optimization to find
%weights in a neural network controller; 
%used Bayesian optimization to solve for the parameters of a
%hierarchical MDP. More recently,   adopted
%Gaussian process optimization to address the problem of safely
%exploring the states of finite MDPs. % There is a rich literature on
% using Gaussian process regression for reinforcement learning and
% control to model the value function or Q function or dynamics
% \cite{nguyen2011model} which is introduced above, but we emphasize
% that our approach of using GP optimization to optimize Q function is
% very different to simply assuming a GP for Q and optimize it. The
% latter one does not take into consideration our uncertainty on the Q
% function, and how to search for actions. 
% , for example, \cite{murray2002nonlinear} used Gaussian process to
% model and control non-linear dynamic systems.

%%% Local Variables:
%%% mode: latex
%%% TeX-master: t
%%% End:

\section{Problem formulation}
\label{sec:background}

Let the state space $S\subset \R^{d_s}$ with metric $d$ and the control space
$U\subset \R^{d_u}$ both be compact and measurable sets. The interior of the state space $S$ is $S^o$ and the boundary is $\partial S$. For the control space $U$, there exists an open set $U^o$ in $R^{d_u}$ such that $U$ is the closure of $U^o$. We assume the state
is fully observed (any remaining latent state will manifest as
stochasticity in the transition models).  Actions $a = (u, \Delta t)$
are composed of both a control on the robot and the duration for which
it will be exerted, so the action space is $A=U\times [T_{min},T_{max}]$, where $T_{min}, T_{max}\in\R_{+}\setminus \{\infty\}$ are the minimum and the maximum amount of duration allowed. The action space $A$ is also a compact set. % and we define $g$ to be its metric. 
 The starting state is $s_0$, and the goal
region is denoted as $\mathcal G \subset S$, in which all states are terminal states. We assume $\mathcal G$ has non-zero measure, and $S$ has finite measure. The {\em transition model}
has the form of a continuous probability density function $p_{s' \mid s,a}$ on the
resulting state $s'$, given previous state $s$ and action $a$,
such that $\forall s'\in S, p_{s' \mid s,a}(s' \mid s,a)\geq 0, \int_S
p(s' \mid s,a)\dif s' = 1$. 

Given a transition model and a cost function $C: S\times S\times A\rightarrow \R$ associated with a goal region% that maps
%$(s, a, s')$ pairs to $\R$
, we can formulate the problem %can be characterized 
as a continuous state-action MDP $(S, A, p_{s' \mid s,a}, R, \gamma)$, where
$R(s' \mid s, a) = - C(s' \mid s,a)$ is the immediate reward function and $\gamma$ is
the discount factor. A high reward is assigned to the states in the goal region $\mathcal G$, and a cost is assigned to colliding with obstacles or taking any action. We would like to solve for the optimal policy $\pi:S\rightarrow A$, for which the value of each state $s$ is
$$ V^\pi(s) =\underset{a\in A}{\max}\int_{s'\in S} p_{s' \mid s,a}(s' \mid s,a)
  \left(R(s'\mid s,a) + \gamma^{\Delta t} V^\pi(s')\right) \dif s'.$$

%\lpk{We should say somewhere (maybe later or maybe here) what our cost
 % function is.  Is there a larger penalty for collisions, or is a
 % penalty somehow associated with $s_{\rm obs}$?}
\hide{
It is common practice to approach the solution of continuous MDPs by
constructing and solving approximations in the form of discrete MDPs~\cite{Alterovitz07,huynh2016incremental}. iMDP~\cite{huynh2016incremental} pursued this approach
for stochastic motion-planning domains, using RRTs with backward
extension to sample states for successive discrete approximate MDPs, and
obtaining probabilistic completeness by covering the possible paths
with probability 1 while keeping at least one goal state always in
the state set. However, sampling states and planning with iMDP is still difficult for many real-world problems because of the %state sampling is attached to bellman updates and \zw{this point may not be obvious}
excessive assumptions  such as Gaussian transition noise and the availability of backward transition models . We describe a similar but more practical state sampling approach in Sec.~\ref{sssec:sampling}. %  
\hide{
iMDP~\cite{huynh2016incremental} pursued this approach
for stochastic motion-planning domains, using RRTs with backward
extension to sample states for successive discrete approximate MDPs, and
obtaining probabilistic completeness by covering the possible paths
with probability 1 while keeping at least one goal state always in
the state set. In this paper, we adopt a similar strategy, but using RRT with forward extension to handle domains %, such as pushing,
%in which the dynamics are not reversible, and dealing with non-Gaussian
%transition noise.
However, the approach proposed by iMDP only works in restrictive conditions where the backward transition model is available. }
Once we have a discrete-state continuous-action MDP
$(\tilde S, A, \hat{P}_{s' \mid s,a}, R, \gamma)$, 
%\lpk{Changed $P$ to   $\hat{P}$ here and later in this section.  Okay?} 
we can solve for a
deterministic policy via value iteration. Here $\tilde S$ is a finite
set of states that satisfy $\tilde S \subset S$, $\hat{P}_{s' \mid s,a}(s' \mid s,a)$
is the probability mass function for the transition from state
$s\in \tilde S$ and action $a=(u,\Delta t)\in A$ to a new state $s'\in \tilde S$,
$R(s' \mid s,a)$ is the immediate reward of the transition for
$s,s'\in \tilde S, a\in A$, and $\gamma$ is the discount factor of the
rewards, the same as in the continuous case. 
Given a discrete MDP and policy $\pi$, the value of each state $s$ at
iteration $n$ is

\begin{small}
$${\small {\tilde V^\pi(s) = \sum_{s'\in\tilde S} \hat{P}_{s' \mid s,a}(s' \mid s,\pi(s))
  \left(R(s' \mid s,\pi(s)) + \gamma^{\Delta t^\pi} \tilde V^\pi(s')\right)}.}$$
\end{small}
Note that $\Delta t$ is part of $\pi(s)$. For the optimal policy $\pi^*$,
\begin{small}
$$\tilde V^{\pi^*}(s) = \underset{a\in A}{\max}
  \sum_{s'\in\tilde S} \hat{P}_{s' \mid s,a}(s' \mid s,a) \left(R(s' \mid s,a) + \gamma^{\Delta
  t} \tilde V^{\pi^*}(s')\right),$$ 
  \end{small}
 which is not a trivial optimization problem since %the dimension of $A$ may be high, and 
it may be computationally expensive to obtain
$\hat P_{s' \mid s,a}(\cdot \mid \cdot,\cdot)$. We develop a new method for batch Bayesian optimization
with caching to efficiently approximate this problem, which is detailed in Sec.~\ref{sec:bayesopt}.
}
%\lpk{We have changed several things:  problem has non-Weiner noise,
% algorithm does forward extension, RTDP, and
%  BO.  If we are going to build on imdp results, we have to discuss
 % the effect of all of these things.}

%we use RRT with forward extension to handle non-holonomic systems and because our learning method estimates a forward model, while iMDP uses backward extension

\section{Our method: \ALGNAME}
\label{sec:imdp}
We describe our algorithm Bayesian Optimization Incremental-realtime Dynamic Programming (\ALGNAME) in this section. 
At the highest level, \ALGNAME in Alg.~\ref{alg:boidp} 
operates in a loop, in which it samples a
discrete set of states $\tilde S\subset S$ and attempts to solve the discrete-state,
continuous-action MDP $\tilde {\mathcal M} = (\tilde S, A, \hat{P}_{s'\mid s,a}, R, \gamma)$. 
Here $\hat{P}_{s' \mid s,a}(s' \mid s,a)$ is the probability mass function for the transition from state
$s\in \tilde S$ and action $a\in A$ to a new state $s'\in \tilde S$. 
The value function for the optimal policy of the approximated MDP $\tilde {\mathcal M}$ is
$ V(s) =\underset{a\in A}{\max} \; Q_s(a),$ where
\begin{align}Q_s(a)=\sum_{s'\in \tilde S} \hat P_{s'\mid s,a}(s' \mid s,a)
  \left(R(s' \mid s,a) + \gamma^{\Delta t} V(s')\right).
  \label{qfunc}
  \end{align}
 If the value of the resulting policy is satisfactory according to the task-related stopping criterion\footnote{For example, one stopping criterion could be the convergence of the starting state's value $V(s_0)$. },
we can proceed; otherwise, additional state samples are added and 
the process is repeated. %The user can specify the desired stopping criterion for the loop in 
%\ALGNAME on % to specify the desired termination condition 
%the resulting policy.  
Once we have a policy $\pi$ on $\tilde S$ from RTDP, the robot can iteratively 
obtain and execute the policy for the nearest state to the current state in the sampled set $\tilde S$ by the metric $d$.

%It then execute the the computed policy $\pi$ until it reaches the goal region. 
%From a theoretical perspective, the algorithm only needs one loop to achieve asymptotic 
There are a number of challenges 
underlying each step of \ALGNAME. First, we need to
find a way of accessing the transition probability density function
$p_{s' \mid s,a}$ ,
which %is used in both \textsc{SamplingStates} and \textsc{RTDP} in Alg.~\ref{alg:boidp} and
is critical for the approximation of
$\hat{P}_{s' \mid s,a}(s' \mid s,a)$ and the value function. We
describe our ``lazy access'' strategy in
Sec.~\ref{ssec:model}. Second, we must find a way to 
compute the values of as few states as possible
to fully
exploit the ``lazy access'' to the transition model. Our solution is
to first use an RRT-like
process~\cite{lavalle1998rapidly,huynh2016incremental} to generate the
set of states that asymptotically cover the state space with low
dispersion (Sec.~\ref{ssec:sampling}), and then ``prune'' the
irrelevant states via RTDP~\cite{barto1995learning}
(Sec.~\ref{ssec:rtdp}). Last, each dynamic-programming
update in RTDP requires a maximization over the action space; we
cannot achieve this analytically and so must sample a finite set of
possible actions. We develop a new batch BO algorithm to focus action
sampling on regions of the action space that are informative and/or
likely to be high-value, as described in Sec.~\ref{sec:bayesopt}.

Both the state sampling and transition estimation
processes assume a collision checker
$\textsc{ExistsCollision}(s,a,s')$ that checks the path from $s$ to
$s'$ induced by action $a$ for collisions with permanent objects in
the map.

\begin{algorithm}[t]
\scriptsize
\caption{\ALGNAME}
\label{alg:boidp}
\begin{algorithmic}[1]
\Function{\ALGNAME}{$s_0, \mathcal G, S, p_{s' \mid s,a}, N_{\min}$}
\State $\tilde S\gets \{s_0\}$
\Loop
\State $\tilde S \gets \textsc{SampleStates}(N_{\min},\tilde S, \mathcal G, S, p_{s' \mid s,a})$
\State $\pi, V = \textsc{RTDP}(s_{0},\mathcal G, \tilde S, p_{s' \mid s,a})$
\EndLoop
\State \textbf{until} stopping criteria reached
\State \textsc{ExecutePolicy}($\pi, \tilde S, \mathcal G$)
\EndFunction
\Statex
\Function{ExecutePolicy}{$\pi, \tilde S, \mathcal G$}
%\LineComment User-defined function to judge if $\pi, V$ are good to return. 

\Loop
 \State $s_c\gets$ current state
\State $\tilde s \gets \argmin_{s\in \tilde S} d(s,s_c)$
\State Execute $\pi(\tilde s)$
\EndLoop
\State \textbf{until} current state is in $\mathcal G$
\EndFunction
\end{algorithmic}
\end{algorithm}

\subsection{Estimating transition models in \ALGNAME}
 
In a typical model-based learning approach, first a monolithic model
is estimated from the data and then that model is used to construct a
policy.  Here, however, we aim to scale to large spaces with
non-Gaussian dynamics, a setting where it is very difficult to represent and
estimate a single monolithic model.  Hence, we take a different approach via 
``lazy access'' to the model: we
estimate local models on demand, as the planning process
requires information about relevant states and actions. 
\label{ssec:model}
  \begin{algorithm}[t]%[H]
  \caption{Transition model for discrete states}
  \label{alg:transition}
  \begin{algorithmic}[1]
  \Function{TransitionModel}{$s,a,\tilde S,p_{s'  \mid  s,a}$}
  \State $\hat S\gets \textsc{HighProbNextStates}(p_{s' \mid s,a}(\tilde S  \mid  s,a)) \cup \{s_{obs}\}$
  \Comment$s_{obs}$ is a terminal state
  \State $\hat P_{s'  \mid  s,a}(\hat S  \mid  s,a) \gets p_{s'  \mid 
    s,a}(\hat S  \mid  s,a)$
  \For{$s'$ \textbf{in} $\hat S$}
  \If{$s'\in S^o$ \textbf{and} \textsc{ExistsCollision}($s,a, s'$)}
  \State $\hat P_{s'  \mid  s,a}(s_{obs}  \mid  s,a) \gets \hat P_{ s'
     \mid  s,a}(s_{obs}  \mid  s,a) + \hat P_{s'  \mid  s,a}(s'  \mid  s,a) $
  \State $\hat P_{s'  \mid  s,a}(s'  \mid  s,a) \gets 0$
  \EndIf
  \EndFor
  \State $\hat P_{s' \mid  s,a}(\hat S  \mid  s,a) \gets
  \textsc{Normalize}(\hat P_{s' \mid  s,a}(\hat S  \mid  s,a))$
  
  \State \Return $\hat S, \hat P_{s'  \mid  s,a}(\hat S  \mid  s,a)$
  \EndFunction
  \end{algorithmic}
  \end{algorithm}

We assume a dataset $D = \{s_i, a_i, s'_i\}_{i=0}^N$ for the system
dynamics and the dataset is large enough to provide a good
approximation to the probability density of the next state given any
state-action pair. If a stochastic simulator exists for the transition
model, one may collect the dataset dynamically in response to queries
from \ALGNAME.
%queries of state-action pairs from \ALGNAME on the fly. 
The ``lazy access'' provides a flexible
interface, which can accommodate a variety of different
density-estimation algorithms with asymptotic theoretical guarantees, 
%asymptotic behaviors could be useful, 
such as kernel density
estimators~\cite{wied2012consistency} and Gaussian mixture
models~\cite{moitra2010settling}. 
% Notice that further task-related
% assumptions on the model may be necessary to employ the
% estimators. \ALGNAME is adaptive to arbitrary transition $p_{s' \mid
%   s,a}$ related to any estimator, but a universal way of choosing the
% estimator is beyond the scope of our paper. 
In our experiments, we
focus on learning Gaussian mixture models with the assumption that
$p_{s' \mid s,a}(s' \mid s,a)$ is distributed according to a mixture of Gaussians $\forall (s,a)\in S\times A$. 
%To estimate the density for the next state $s'$ conditioned on the current state $s$, and action $a$, we can adopt different approaches based on the task assumptions. For example, one may use kernel density estimators or Gaussian mixture models to compute the density for $s'$ with the subset $D'=\{s_j, a_j, s'_j\}_{j=0}^M\subseteq D$ whose states and actions $s_j, a_j$ are the most similar to $s, a$, under a certain metric (e.g. $L_1$ norm) for the state action space $S\times A$. It is also possible to use some heuristics such as spatial invariance to make better use of the dataset if the task allows, e.g. quasi-static pushing. The density estimation is expressed as a function handle $p_{s' \mid s,a}$ with arguments $s$ and $a$, which is only computed when the planner requires it. 

Given a discrete set of states $\tilde S$, starting state $s$
and action $a$, we compute the approximate discrete transition model
$\hat P_{s' \mid s,a}$ as shown in Algorithm~\ref{alg:transition}. We use the function  $\textsc{HighProbNextStates}$ to select the largest set of next states
$\hat S\subseteq \tilde S$ such that $\forall s'\in \hat S, p_{s' \mid s,a}(s' \mid s,a)>\epsilon$. $\epsilon$ is a small threshold parameter, e.g. we can set $\epsilon=10^{-5}$. %,  $\Theta(\log  \mid \tilde S \mid )$following iMDP~\cite{huynh2016incremental}. %a probability higher than a certain threshold
%$\epsilon_t$
 %(e.g. $1e-5$) to be the non-zero-probability resulting
%states. 
% Alternatively,  $\textsc{HighProbNextStates}$ can 
% select a fixed number, $N_{t}$, of states with the highest $N_{t}$
% probability. In practice we can set the threshold $\epsilon_t$ to be
% smaller or the state count $N_{t}$ to be higher to have a more
% accurate approximation.  
If $p_{s'  \mid  s,a}$ does not take obstacles into account, we have to
check the path from state $s$ to next state $s'\in \tilde S$ induced
by action $a$ for collisions, and model their effect in the
approximate discrete transition model $\hat P_{s'  \mid  s,a}$. To achieve
this, we add a dummy terminal state $s_{obs}$, which represents a
collision, to the selected next-state set $\hat S$.  Then, for any
$s, a, s'$ transition that generates a collision, we move the
probability mass $\hat P_{s'  \mid  s,a}(s'  \mid  s,a)$ to the transition to
the collision state $\hat P_{s'  \mid  s,a}(s_{obs}  \mid  s,a)$. Finally,
$\hat P_{s' \mid  s,a}(\hat S \mid  s,a)$ is normalized and returned together with
the selected set $\hat S$.

These approximated discrete transition models can be indexed by state $s$
and action $a$ and cached for future use in tasks that use the
same set of states $\tilde S$ and the same obstacle map.  
%In addition, 
%the learned Gaussian mixture models $p_{\Delta s  \mid  a}$ can be cached for
%future use in any task with the same free-space dynamics.
% We also save all the data points,
% which we view as the knowledge of dynamics. 
The memory-based essence of our modeling strategy is similar to the
strategy of non-parametric models such as Gaussian processes, which
make predictions for new inputs via smoothness assumptions and
similarity between the query point and training points in the data set.

For the case where the dynamics model $p_{s' \mid s,a}$ is given, computing the approximated transition  $\hat P_{s' \mid s,a}$ could still be computationally expensive because of the collision checking. Our planner is designed to alleviate the high computation in $\hat P_{s' \mid s,a}$ by focusing on the relevant states and actions, as detailed in the next sections.

\subsection{Sampling states}
\label{ssec:sampling}
Algorithm~\ref{alg:statesample} describes the state
sampling procedures. The input to
$\textsc{SampleStates}$ in Alg.~\ref{alg:statesample} includes the minimum number of states,
$N_{\min}$, to sample at each iteration of \ALGNAME. 
It may be that more than
$N_{\min}$ states are sampled, because sampling must continue until at
least one terminal goal state is included in the resulting set $\tilde S$. 
To generate a discrete state set, we sample 
states both in the interior of $S^o$ and on its boundary 
$\partial S$.  Notice that we can always add more states by
calling \textsc{SampleStates}.%Terminal states are the states in the goal region $\mathcal G$. 
%We assume there exists a distance metric $d$ 
%that can measure the closeness between states in $S$. 
\begin{algorithm}[t]%[H]
\scriptsize
\caption{RRT states sampling for \ALGNAME}
\label{alg:statesample}
\begin{algorithmic}[1]
\Function{SampleStates}{$N_{min}, \tilde S, \mathcal G, S, p_{s' \mid s,a}$}
\State $\tilde S^o \gets \textsc{SampleInteriorStates}(\lceil N_{min}/2\rceil, \tilde S, \mathcal G, S, p_{s' \mid s,a})$
\State $\partial\tilde{ S}\gets \textsc{SampleBoundaryStates}(\lceil N_{min}/2 \rceil, \tilde S, \mathcal G, S, p_{s' \mid s,a})$
\State \Return $\tilde S^o\cup \partial\tilde{ S}$
\EndFunction
\Statex
\Function{SampleInteriorStates}{$N_{min}, \tilde S, \mathcal G, S, p_{s' \mid s,a}$}
%\Comment Initialize $\tilde S = \{s_0\}$ if calling for the first time
  \While{$ \mid \tilde S \mid  < N_{min}$ or $\mathcal G \cap \tilde S = \emptyset$} 
  %\Comment Ensure goal is sampled before stopping the for loop 
  \label{alg1for}
    \State $s_{rand} \gets \textsc{UniformSample}(S)$ \label{alg1uniform}
    \State $s_{nearest} \gets \textsc{Nearest}(s_{rand}, \tilde S)$ \label{alg1nearest}
    \State $s_n, a_n \gets \textsc{RRTExtend}(s_{nearest}, s_{rand}, p_{s' \mid s,a})$
    %\Comment Solve the optimization problem in Eq.~\ref{eq:sample} 
    \If{found $s_n,a_n$}
    \State $\tilde S\gets \tilde S \cup \{s_n\}$
    %\State $\pi(s_{nearest}) \gets a_n$ 
    \EndIf
  \EndWhile
  \State \Return $\tilde S$
\EndFunction
\Statex
\hide{
\Function{RejectSampling}{ }
\Loop
\State $s_{rand}\gets QuasiMonteCarloSampler()$
\If{\textbf{not }$\textsc{ExistsCollision}(s_{rand})$}
\State \Return $s_{rand}$
\EndIf
\EndLoop 
\EndFunction
\Statex
}
\Function{RRTExtend}{$s_{nearest},s_{rand}, p_{s' \mid s,a}$}
\State $d_n = \infty$
\While{stopping criterion not reached}
\State $a \gets \textsc{UniformSample}(A)$ \label{alg1uniforma}
\State $s'\gets \textsc{Sample}\left(p_{s' \mid s,a}(\cdot \mid s_{nearest},a)\right)$ \label{alg:rrtuni}
\If{ $\left(\textbf{not } \textsc{ExistsCollision}(s,s',a)\right)$ \textbf{and} $d_n > d(s_{rand},s')$}
\State $d_n\gets d(s_{rand},s')$
\State $s_n, a_n \gets s', a$
\EndIf
\EndWhile
\State \Return $s_n,a_n$
\EndFunction
\end{algorithmic}
\end{algorithm}

% \lpk{What exactly is the expected transition?  We should define it
%   carefully, maybe in the learning section?  Also, by our arguments
%   about multi-modality, it seems to me that we *really* don't want to
%   use the expected transition dynamics:  it would make more sense to
%   use the mean of the most likely mixture component.  Otherwise you
%   could imagine having a case where the object will surely either move
%   to the left or to the right, but the expected dynamics would mean it
%   moves straight forward.} \zw{the RRT procedure is solely for the purpose of "rapid exploration". RTDP takes care of the stochastic multi-modal dynamics. I don't think it matters whether to sample expectation or multimodes as long as it expand in an approximate direction that the agent is able to go to. We can also just use samples from $p_{s' \mid s,a}(**)$ which is the one I use in the experiment..}

To generate one interior state sample, we randomly generate a state
$s_{rand}$, and find $s_{nearest}$ that is the nearest state to
$s_{rand}$ in the current sampled state set $\tilde S$. Then we
sample a set of actions from $A$, for each of which we sample the next state $s_n$ from the dataset $D$ given the state-action pair $s_{neareast},a$ (or from $p_{s' \mid s,a}$ if given). We choose the action
$a$ that gives us the $s_n$ that is
the closest to $s_{rand}$. 
%  \lpk{What is SampleInteriorStates, exactly?}
% \zw{clarified in alg3} \lpk{And what is $\pi$?}\zw{It is the policy
%   which is a dictionary mapping from visited state to action. I
%   removed the line in RRT relatd to $\pi$. It shouldn't be initialized
%   there.}
To sample states on the boundary $\partial S$, we assume a uniform random
generator for states on $\partial S$ is available. If not, we can use
something similar to $\textsc{SampleInteriorStates}$ but only sample inside
the obstacles uniformly in line~\ref{alg1uniform} of
Algorithm~\ref{alg:statesample}. Once we have a sample $s_{rand}$ in
the obstacle, we try to reach $s_{rand}$ by moving along the path
$s_{rand}\rightarrow s_n$ incrementally until a collision is reached.

%IMDP suggested that the ratio of the number of samples in $S^o$ and
%$\partial S$ be $1:1$. However, we have found in practice that it is
%beneficial to have more states sampled in $\partial S$ in the
%beginning, in order to provide the planner with a good representation
%of the parts of the space in which collisions can occur, so that they
%can be avoided. 

  %Adding extra boundary states does not greatly affect
%planning-time because RTDP does not expend much computation time on
%terminal states. \zw{regular obstacles are not terminal states.}
% iMDP use backward extension for RRT samples because it's doing state
% sampling and MDP solving at the same time. Goal state has to be in the
% set of sampled states all the time to ensure a solution for MDP. It is
% fine for the MDP to have no starting state because according to iMDP's
% policy, in a state not in the sampled set, it follows the policy of
% the nearest neighbor in the set. In our problem, we care more about
% getting the policy for a particular starting position and goal region
% pairs. So in the starting position is not in the set $\tilde S$ or
% very close to one of the state in $\tilde S$, the policy does not make
% sense.  We can also do the backward RRT sampling for reversible
% systems. And we will discuss it in the Appendix. Here we would like to
% be as general as possible.
% If it is not the first time we call it, in
% line~\ref{alg1for} of Algorithm~\ref{alg:statesample}, we should
% ignore the goal test. 

\subsection{Focusing on the relevant states via RTDP}
\label{ssec:rtdp}
We apply our algorithm with a known starting state
 $s_0$ and  goal region $\mathcal G$.  Hence, it is not necessary to
 compute a complete policy, and so we can use RTDP~\cite{barto1995learning} to compute a value
function focusing on the relevant state space and a policy that, with high
probability, will reach the goal before it reaches a state for which
an action has not been determined. We assume an upper bound of the
values for each state $s$ to be $h_u(s)$. One can approximate $h_u(s)$ via the shortest distance from each state to the goal region on the fully connected graph with vertices $\tilde S$.  We show the pseudocode in Algorithm~\ref{alg:rtdp}.
%\lpk{Say something about how you get $h_u$?}
%\lpk{Why not just use V here, instead of U?}
\begin{algorithm}[t]%[H]
\scriptsize
\caption{RTDP for \ALGNAME}
\label{alg:rtdp}
\begin{algorithmic}[1]

\Function{RTDP}{$s_{0},\mathcal G, \tilde S, p_{s'| s,a}$}

\For{$s$ \textbf{in} $\tilde S$}
\State $V(s) = h_u(s)$ 
\Comment Compute the value upper bound %of the value for $s$
\EndFor

\While{$V(\cdot)$ not converged}
\State $\pi,V\gets\textsc{TrialRecurse}(s_{0},\mathcal G, \tilde S, p_{s'| s,a})$
\EndWhile
\State\Return $\pi, V$
\EndFunction

\Statex
\Function{TrialRecurse}{$s,\mathcal G, \tilde S, p_{s'| s,a}$}
\If{reached cycle \textbf{or} $s\in\mathcal G$ } %$U(s)$ converged
\State \Return 
\EndIf 

\State $\pi(s)\gets \argmax_a Q(s,a,\tilde S, p_{s'| s,a})$ \label{opt1}
\Comment Max via BO
\State $s' \gets \textsc{Sample}(\hat P_{s'| s,a}(\tilde S| s,\pi(s)))$
\State $\textsc{TrialRecurse}(s',\mathcal G, \tilde S, p_{s'| s,a})$
\State $\pi(s)\gets \argmax_a Q(s,a,\tilde S, p_{s'| s,a})$ \label{opt2}
\Comment Max via BO
\State $V(s)\gets Q(s,\pi(s),\tilde S, p_{s'| s,a})$

\State\Return $\pi, V$
\EndFunction
\Statex

\Function{Q}{$s,a,\tilde S, p_{s'| s,a}$}
\If{$\hat P_{s'| s,a}(\tilde S| s,a)$ has not been computed}
\State $\hat P_{s'| s,a}(\tilde S| s,a) = \zeros$ 
\Comment $\hat P_{s'| s,a}$ is a shared matrix % shared across the process
%\lpk{Not sure we need this line} \zw{because we have to sample from \hat S later, but \hat S is not returned to the trialrecruse function. \hat S is a very small subset of \tilde S}

\State 
\footnotesize{$\hat S, \hat P_{s'| s,a}(\hat S| s,a)$  $\gets \textsc{TransitionModel}(s,a,\tilde S,p_{s'| s,a})$}

%\lpk{Should $\hat{S}$ below be $\tilde{S}$?  Or, maybe better just a dot?}
\EndIf
\State \Return $R(s,a) + \gamma^{\Delta t} \sum_{ s'\in\hat S} \hat P_{s'| s,a}(s'| s,a)V(s')$
\EndFunction

\hide{
\Function{TransitionModel}{s,a}
\State $\hat S = \textsc{HighProbNextStates}(p_{s'| s,a}(\tilde S |  s, a))$ \label{md2}

\Loop
\State $\hat S = \textsc{HightProbNextStates}(p_{s'| s,a}(\tilde S |  s, a))$ \label{md2}
\If{$p_{s'| s,a}(\hat S |  s, a) >$ Threshold}
\State \textbf{break}
\Else
\State $\tilde S \gets \tilde S \cup \textsc{SampleInit}(p(s'| s,a))$ 
\EndIf
\EndLoop

\State $T(\hat S|  s, a) \gets \textsc{Normalize} (p_{s'| s,a}(\hat S |  s, a))$
\State \Return  $T(\hat S|  s, a)$
\EndFunction
}
\hide{
\Function{SampleInit}{$p$}
\Comment $p$ is a probability distribution over $S$
\State $s\gets \textsc{Sample}(p)$
\If{$s\notin \tilde S$}
\State \textsc{InitNode}(s)
\EndIf
\State \Return s
\EndFunction
}

\end{algorithmic}
\end{algorithm}
When doing the recursion (\textsc{TrialRecurse}), we can save additional
computation when maximizing $Q_s(a)$. Assume that the last time
$\argmax_a Q_s(a)$ was called, the result was $a^*$ and the
transition model tells us that $\bar S$ is the set of possible
next states. The next time we call $\argmax_a Q_s(a)$, if
the values for $\bar S$ have not changed, we can just return $a^*$ as
the result of the optimization. This can be done easily by caching the
current (optimistic) policy and transition model for each state.
\hide{
%To get an anytime policy, we record the policy for the states we visited once the planner has visited a state in the goal. 
% If the planner is queried for the policy, the planner reports that the
% policy is not available yet if no goal state is visited, or returns
% the recorded policy of visited states if it has visited a state in the
% goal. 

Because the space is continuous and there is always area that we have
not sampled before, which results in small probability even in the
high probability next states (line~\ref{md2} in
Algorithm~\ref{alg:rtdp}), and the states we sampled cannot really
represent the next state very well. In this case, because we would
like to make sure our sampled states are representative enough,
heuristically we can solve the following optimization instead of
$\max_a Q(s,a)$: 
\begin{equation}
\label{eq:maxq}
\begin{aligned}
& \underset{a\in A}{\textbf{maximize}}
& & Q(s,a)\\
& \textbf{subject to} 
&& p(\hat S|  s,a) > \delta
\end{aligned}
\end{equation}
We use the Lagrangian form of the optimization problem for our
Bayesian optimization. In this way we can ensure our anytime policy is
good enough for the current number of states.} 
\subsection{Focusing on good actions via BO}
\label{sec:bayesopt}
RTDP in Algorithm~\ref{alg:rtdp} relies on a challenging optimization over a continuous and possibly high-dimensional action space.  
%The optimization in Algorithm~\ref{alg:rtdp} is not trivial:  it takes
%place over a continuous and possibly high-dimensional action space.
Queries to
$Q_s(a)$ in Eq.~\eqref{qfunc} %often require estimating a new model, which can be a very expensive procedure. 
  can be very expensive because in many cases
a new model must be estimated. %So we wish 
Hence, we need to limit the number of
points queried during the optimization.
There is no clear strategy for computing the gradient of
$Q_s(a)$, and random sampling is very sample-inefficient especially as
the dimensionality of the space grows.
We will view the optimization
of $Q_s(a)$ as a black-box function optimization problem, and use
batch BO to efficiently approximate the solution and make full use of the parallel computing resources.

\begin{algorithm}[b]
\scriptsize
  \caption{Optimization of $Q_s(a)$ via sequential GP optimization}\label{alg:bo}
  \begin{algorithmic}[1]
  %\State \textbf{Inputs:} sequence length $T$, blackbox function $f$
  %\State \textbf{Outputs:} decision $x^*$
    %\Function{Update}{$\vt^*,a^*,\vu^*,\vt,a,\vu,\rho$}%\Comment{The g.c.d. of a and b}
    \State $\mathfrak D_0 \gets \emptyset$
      \For{$t = 1\rightarrow T$}
      \State $\mu_{t-1}, \sigma_{t-1}$ $\gets$ GP-predict($\mathfrak D_{t-1}$)
      \State $a_t\gets \argmin_{a\in A}{\frac{h_u(s)-\mu_{t-1} (a)}{\sigma_{t-1} (a)}}$ 
      \State $y_t\gets Q_s(a_t)$ % + \epsilon_t, \epsilon_t\sim\mathcal N(0,\sigma^2)$
      \State $\mathfrak D_t \gets \mathfrak D_{t-1} \cup \{a_t,y_t\}$
      \EndFor
     % \State $a^* \gets \argmax_{(a,y)\in\mathfrak D_t} y_t$
      %      \State \Call{foo}{\null}
    %\EndFunction
  \end{algorithmic}
\end{algorithm}

\begin{algorithm}[t]
\scriptsize
  \caption{Optimization of $Q_s(a)$ via batch GP optimization}\label{alg:bbo}
  \begin{algorithmic}[1]
  %\State \textbf{Inputs:} sequence length $T$, blackbox function $f$
  %\State \textbf{Outputs:} decision $x^*$
    %\Function{Update}{$\vt^*,a^*,\vu^*,\vt,a,\vu,\rho$}%\Comment{The g.c.d. of a and b}
    \State $\mathfrak D_0 \gets \emptyset$
      \For{$t = 1\rightarrow T$}
      \State $\mu_{t-1}, \sigma_{t-1}$ $\gets$ GP-predict($\mathfrak D_{t-1}$)
      \State $B\gets \emptyset$
      \For{$i = 1\rightarrow M$}
      \State $B\gets B\cup \{\argmax_{a\in A}{F_s(B\cup \{a\}) - F_s(B)} \}$ 
      \EndFor
      \State $\vy_B\gets Q_s(B)$ % + \epsilon_t, \epsilon_t\sim\mathcal N(0,\sigma^2)$
      \Comment Test $Q_s$ in parallel
      \State $\mathfrak D_t \gets \mathfrak D_{t-1}\cup\{B,\vy_B\}$
      \EndFor
     % \State $a^* \gets \argmax_{(a,y)\in\mathfrak D_t} y_t$
      %      \State \Call{foo}{\null}
    %\EndFunction
  \end{algorithmic}
\end{algorithm}

We first briefly review a sequential Gaussian-process optimization
method, GP-EST~\cite{wang2016est}, shown in Algorithm~\ref{alg:bo}. For a
fixed state $s$, we assume $Q_s(a)$ is a sample from a Gaussian process with
zero mean and kernel $\kappa$. % Algorithm~\ref{alg:bo} aims at optimizing
% $Q_s(a)$ in a sequential way.
At iteration $t$, we select action $a_t$
and observe the function value $y_t=Q_s(a_t)+\epsilon_t$, where
$\epsilon_t\sim\mathcal N(0,\sigma^2)$. % is i.i.d.\ Gaussian noise$$.
  Given the observations
$\mathfrak D_t=\{(a_\tau,y_\tau)\}_{\tau=1}^{t}$ up to time $t$, we
obtain the posterior mean and covariance of the $Q_s(a)$ function via
the kernel matrix
$\mK_t =\left[\kappa(a_i,a_j)\right]_{a_i,a_j\in \mathfrak D_t}$ and
$\vkappa_t(a) = [\kappa(a_i,a)]_{a_i\in \mathfrak
  D_t}$~\cite{rasmussen2006gaussian}:
\begin{eqnarray*}
\mu_{t}(a) & = & \vkappa_t(a)\T(\mK_t+\sigma^2\mI)^{-1}\vy_t\\
%\quad \textrm{and} \quad
\kappa_{t}(a,a') & = & \kappa(a,a') - \vkappa_t(a)\T(\mK_t+\sigma^2\mI)^{-1}
\vkappa_t(a')\;\;.
\end{eqnarray*}
% $\mu_{t}(a) = \vkappa_t(a)\T(\mK_t+\sigma^2\mI)^{-1}\vy_t$,
% and % \quad \textrm{and} \quad
% $\kappa_{t}(a,a') = \kappa(a,a') - \vkappa_t(a)\T(\mK_t+\sigma^2\mI)^{-1}
% \vkappa_t(a')$.
  The posterior variance is given by
$\sigma^2_{t}(a) = \kappa_t(a,a)$. We can then use the posterior mean
function $\mu_t(\cdot)$ and the posterior variance function
$\sigma^2_t(\cdot)$ to select which action to test in the next
iteration. We here make use of the assumption that we have an
upper bound $h_u(s)$ on the value $V(s)$. We select the action that is most likely to
have a value greater than or equal to $h_u(s)$ to be the next one to
evaluate. %
Algorithm~\ref{alg:bo} relies on sequential tests of
$Q_s(a)$, but %given availability of parallel computation hardware, 
it may be much more effective to test $Q_s(a)$ for multiple values of $a$
in parallel. This requires us to choose a diverse subset of actions
that are expected to be informative and/or have good values. 

We propose a new 
batch Bayesian optimization method that selects a query set that 
has large diversity and low values of the acquisition function
$G_{s,t} (a)= \left(\frac{h_u(s)-\mu_{t-1} (a)}{\sigma_{t-1} (a)}\right)$. The key idea is to
maximize a submodular objective function with a cardinality constraint
on $B\subset A, |B|=M$ that characterize both diversity and quality:
\begin{align}
F_s(B) = \log \det \mK_B  - \lambda \sum_{a\in B} \frac{h_u(s)-\mu_{t-1} (a)}{\sigma_{t-1} (a)}
\end{align}
where $\mK_B = [\kappa(a_i,a_j)]_{a_i, a_j\in B}$ and $\lambda$ is a
trade-off parameter for diversity and quality. If $\lambda$ is large,
$F_s$ will prefer actions with lower $G_{s,t(a)}$, which means a better chance of having high
values. If $\lambda$ is low, $\log \det \mK_B$ will dominate
$F_s$ and a more diverse subset $B$ is preferred. 
$\lambda$ can be chosen by cross-validation. We optimize the heuristic function
$F_s$ via greedy optimization which yield a $1-\frac1e$ approximation
to the optimal solution. We describe the batch GP optimization in
Algorithm~\ref{alg:bbo}. 

The greedy optimization can be efficiently implemented using the
following property of the determinant:
\begin{align}
& F_s(B\cup\{a\}) - F_s(B) \\
&= \log\det \mK_{B\cup\{a\}} - \log\det\mK_B - \frac{h_u(s)-\mu_{t-1} (a)}{\sigma_{t-1} (a)} \\
& = \log (\kappa_a - \vkappa_{Ba}\T\mK_B^{-1}\vkappa_{Ba}) - \frac{h_u(s)-\mu_{t-1} (a)}{\sigma_{t-1} (a)}
\end{align} 
where $\kappa_a= \kappa(a,a), \vkappa_{Ba} = [\kappa(a_i,a)]_{a_i\in B}$.% and $\mK_B$ is the gram matrix on B. 

\section{Theoretical analysis}
In this section, we characterize the theoretical behavior of \ALGNAME.   Thm.~\ref{thm:ebd} establishes the error bound for the value function on the \emph{$\hat \pi^*$-relevant} set of states~\cite{barto1995learning}, where $\hat\pi^*$ is the optimal policy computed by \ALGNAME. A set $B\subseteq S$ is called \emph{$\pi$-relevant} if all the states in $B$ is reachable via finite actions from the starting state $s_0$ under the policy $\pi$. We denote $|\cdot|_{B}$ as the $L_{\infty}$ norm of a function $\cdot$ over the set $B$. 

We assume the existence of policies whose relevant sets intersect with $\mathcal G$. %This assumption is reasonable because it is impossible for any algorithm to solve a problem that is not solvable. 
If there exists no solution to the continuous state-action MDP $\mathcal M$, our algorithm will not be able to generate an RRT whose vertices contain a state in the goal region $\mathcal G$, and hence no policy will be generated. We use the reward setup described in Sec.~\ref{sec:background}. For the simplicity of the analysis, we set the reward for getting to the goal large enough such that the optimal value function $V^*(s) = \max_{a\in A} Q_s(a)$ is positive for any state $s$ on the path to the goal region under the optimal policy $\pi^*$. 

We denote the measure for the state space $S$ to be $\rho$ and the measure for the action space $A$ to be $\psi$. Both $\rho$ and $\psi$ are absolutely continuous with respect to Lebesgue measure. The metric for $A$ is $g$, and for $S$ is $d$. We also assume the transition density function $p_{s'\mid s,a}$ is not a generalized function and satisfies the property that if $\int_{\mathcal F} p_{s'|s,a}(s'|s,a) \dif s' > 0$, then $\rho(\mathcal F)>0$. 
Without loss of generality, we assume $\min \Delta t = 1$ and $\max \Delta t = \mathcal T$.  

Under mild conditions on $Q_s(a)$ specified in Thm.~\ref{thm:ebd}, we show that with finitely many actions selected by
BO, the expected accumulated error expressed by the difference between the optimal value function %$L_\infty$ norm of the difference
$V^*$ and the value function $\hat V$ of the policy computed by
\ALGNAME in Alg.~\ref{alg:boidp} on the \emph{$\hat \pi^*$-relevant} set decreases sub-linearly in the number of actions selected for optimizing $Q_s(\cdot)$ in Eq.~\eqref{qfunc}.
%Let $Q_s(a)=\sum_{s'} P_n(s'|s,a) \left(R_n(s'|s,a) + \gamma^{\Delta
 %   t} V_n^{\pi^*}(s')\right)$.
%It has been
%shown that, the optimal value function $\tilde V^*$ computed by
%iMDP~\cite{huynh2016incremental} will become arbitrarily close to the
%optimal value function $V^*$ of the original continuous state-action
%MDP as $|\tilde S|$ sampled by the RRT goes to
%infinity~\cite{huynh2016incremental}, under the strict condition that
%the Bellman equation is perfectly solved. Based on iMDP, we further
%show in the following theorem that with finitely many actions selected by
%Bayesian optimization, the $L_\infty$ norm of the difference between
%$V^*$ and the value function $\hat V$ of the policy computed by
%\ALGNAME decreases sub-linearly in the number of actions selected for
%optimizing
%$Q_s(\cdot)$.%will be arbitrarily close to $\tilde V^{\pi^*}$ as the number of sampled
%actions goes to infinity.

%\lpk{Would it be cleaner to state and prove a theorem that's just
%  about using BO to solve a discrete-state continuous-action MDP (not
%  involving our whole algorithmic framework)?  Then, a combination of
%  that theorem + the RTDP theorem lets us conclude that we have a
%  boundedly optimal solution to each individual discrete-state
%  continuous-action MDP inside BOIRRTDP.  Then an asymptotic coverage
%  sort of argument from RRT or iMDP says that that sequence of
%  discrete-state continuous-action MDPs converges in some way so that
%  the optimial solutions to them converge to the optimal solution to
%  the whole problem?}

\begin{thm}[Error bound for \ALGNAME]
\label{thm:ebd}
%Assume that $Q_s(a)$ is a function sampled from a Gaussian process
%with known priors for any state $s\in \tilde S$. Let $\sigma$ be the
%standard deviation of the computational zero-mean Gaussian noise in
%$Q_s(a)$. With probability at least $1-\delta$, if we sample $T$
%actions for each state to optimize $Q_s(a)$ and run synchronous value
%iteration for $K$ iterations, the infinite norm of the difference
%between the optimal value function for the discrete MDP with $\tilde
%S$ states and the optimal value function computed from \ALGNAME is
%\textbf{at most} $\frac{c + 2\log \frac{TK}{2\delta}
%}{1-\gamma}\sqrt{\frac{2\eta_T}{T\log (1+\sigma^2)} }$, where $c =
%\underset{s\in S, t\in[1,T], k\in[1,K]}{\max}\min_{a\in A}
%G_{s,t,k}(a)$,  $G_{\cdot,\cdot,\cdot}(\cdot)$ is the acquisition
%function in \cite[Theorem 3.1]{wang2016est}, and $\eta_T$ is the
%maximum information gain of the selected actions~\cite[Theorem
%5]{srinivas2009gaussian}.%$\kappa$, where $C\sqrt{\frac{\gamma_T}{T}}
%< \delta (1-\gamma)$.
 Let $D = \{s_i, a_i, s'_i\}_{i=0}^N$ be the dataset that is collected from the true transition probability $p_{s' \mid s,a}$, $\forall (s,a)\in S\times A$. We assume that the transition model $p_{s' \mid s,a}$ estimated by the density estimator asymptoticly  converges to the true model. $\forall s\in S$, we assume $Q_s(a)=\int_{s'\in S} p_{s' \mid s,a}(s' \mid s,a)
  \left(R(s'\mid s,a) + \gamma^{\Delta t} V^*(s')\right) \dif s'$ is a function locally continuous at $\argmax_{a\in A} Q_s(a)$, where $V^*(\cdot) = \max_{a\in A}Q_{\cdot}(a)$ is the optimal value function for the continuous state-action MDP  $\mathcal M=(S,  A,
p_{s' \mid s,a}, R, \gamma)$. $V^*(\cdot)$ is associated with an optimal policy whose relevant set contains at least one state in the goal region $\mathcal G$. % along both $s$ and $a$. 
  At iteration $k$ of RTDP in Alg.~\ref{alg:rtdp}, we define $\hat V_k$ to be the value function for $\tilde{\mathcal M}=(\tilde S, A, \hat{P}_{s' \mid s,a}, R, \gamma)$ approximated by \ALGNAME, $\hat \pi_k$ to be the policy corresponding to $\hat V_k$, and $B_k$ to be the \emph{$\hat\pi_k$-relevant set}. We assume that
 \[Q_{s,k}(a) =  \sum_{s'\in\tilde S} \hat{P}_{s' \mid s,a}(s' \mid s,a)
  \left(R(s' \mid s,a) + \gamma^{\Delta t} \hat V_{k-1}(s')\right) \] is a function sampled from a Gaussian process
  with known priors and i.i.d Gaussian noise $\mathcal N(0,\sigma)$. %, and is locally continuous at its optimum for any state $s\in \tilde S$. 
%  $B_k$ is  the \emph{$\hat\pi_k$-relevant set} of the policy $\hat \pi_k$ computed by \ALGNAME at iteration $k$. 
If we allow Bayesian optimization for $Q_{s,k}(a)$ to sample
  $T$ actions for each state and run RTDP in Alg.~\ref{alg:rtdp} until it converges with respect to the Cauchy's convergence criterion~\cite{cauchy1821cours} with
  $\mathcal K <\infty$ iterations, in expectation, % the $L_\infty$ norm of the difference
  %between the optimal value function for the discrete-state
  %continuous-action MDP with states $\tilde S$ and the optimal value
 % function computed from \ALGNAME 
  \[\lim_{|\tilde S|,|D|\rightarrow \infty} |\hat V_{\mathcal K } (\cdot)- V^*(\cdot)|_{B_{\mathcal K}} \leq  \frac{\nu}{1-\gamma}\sqrt{\frac{2\eta_T}{T\log (1+\sigma^2)} },\]
  where $\eta_T$ is the
  maximum information gain of the selected actions~\cite[Theorem
  5]{srinivas2009gaussian},
  $\nu = \underset{s,t,k}{\max} \,\underset{a\in A}\min\,
  G_{s,t,k}(a)$, and $G_{s,t,k}(\cdot)$ is the acquisition
  function in \cite[Theorem 3.1]{wang2016est} for state $s\in \tilde S$, iteration $t=1,2,\cdots, T$ in Alg.~\ref{alg:bo} or~\ref{alg:bbo}, and iteration $k=1,2,\cdots, \mathcal K$ of the loop in Alg.~\ref{alg:rtdp}. 
% \zw{changed  \mathfrak K  to mathfrak  \mathfrak K  because we use  \mathfrak K  to denote the number of components in exp 1}
\end{thm}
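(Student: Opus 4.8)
The plan is to decompose the total error $|\hat V_{\mathcal K}(\cdot) - V^*(\cdot)|_{B_{\mathcal K}}$ into three sources and control each in turn: (i) the discretization/model error arising from approximating the continuous MDP $\mathcal M$ by the discrete-state MDP $\tilde{\mathcal M}$, which should vanish as $|\tilde S|, |D| \to \infty$; (ii) the RTDP convergence error, handled by the Cauchy criterion after $\mathcal K$ iterations; and (iii) the per-state Bayesian-optimization error incurred by replacing the true $\max_{a\in A} Q_{s,k}(a)$ with the best value found among $T$ sampled actions. The first step is to argue that under the stated assumptions---the density estimator converges to the true transition model, $\rho(\mathcal F)>0$ whenever $\int_{\mathcal F} p_{s'|s,a}\dif s'>0$, and the RRT sampling asymptotically covers $S$ with low dispersion (Sec.~\ref{ssec:sampling})---the discrete transition operator converges to the continuous one, so that in the limit $|\tilde S|,|D|\to\infty$ the only surviving error is the BO action-selection error. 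This is where I would lean on the local continuity of $Q_s$ at its argmax together with the non-degeneracy of $p_{s'\mid s,a}$ to pass from discrete sums back to the integral defining $V^*$.

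Next I would bound the single-step BO error. For a fixed state $s$ and RTDP iteration $k$, $Q_{s,k}(a)$ is a GP sample, and Algorithm~\ref{alg:bo}/\ref{alg:bbo} selects actions by minimizing the acquisition function $G_{s,t,k}(a) = (h_u(s)-\mu_{t-1}(a))/\sigma_{t-1}(a)$. Invoking the GP-EST guarantee~\cite[Theorem 3.1]{wang2016est} together with the maximum-information-gain machinery of~\cite[Theorem 5]{srinivas2009gaussian}, the gap between $\max_a Q_{s,k}(a)$ and the value returned after $T$ samples is controlled by $\nu\sqrt{2\eta_T/(T\log(1+\sigma^2))}$, where $\nu = \max_{s,t,k}\min_{a\in A} G_{s,t,k}(a)$ uniformizes the acquisition-function bound across all states, all $T$ inner iterations, and all $\mathcal K$ RTDP iterations. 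The key point is that this per-state regret bound decays like $\sqrt{\eta_T/T}$, i.e.\ sublinearly, which is exactly the rate that must propagate to the final statement.

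The final step is to accumulate the per-step errors through the dynamic-programming recursion. Since each Bellman backup in RTDP contracts by the discount factor $\gamma$ (using $\min\Delta t = 1$ so that $\gamma^{\Delta t}\le \gamma$), a one-step value error $\delta$ induces a geometric series whose sum is bounded by $\delta/(1-\gamma)$. Combining this contraction factor with the per-state BO bound yields the claimed $\frac{\nu}{1-\gamma}\sqrt{2\eta_T/(T\log(1+\sigma^2))}$, restricted to the $\hat\pi_{\mathcal K}$-relevant set $B_{\mathcal K}$ because only states reachable under the computed policy accumulate error. I expect the \textbf{main obstacle} to be rigorously justifying the interchange of the two limits with the fixed-$T$ BO bound: one must show that the discretization error (i) really does vanish \emph{uniformly} enough over $B_{\mathcal K}$ that it does not interact with the BO error, and that the $\hat\pi_k$-relevant sets $B_k$ stabilize as $k\to\mathcal K$ so that taking $\max$ over $k$ in the definition of $\nu$ is legitimate. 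Handling the expectation (over the randomness in the GP samples and the RRT/sampling) while simultaneously taking $|\tilde S|,|D|\to\infty$ is the delicate part, and I would isolate it as a separate lemma establishing that the expected one-step backup error converges to the pure BO term before feeding it into the contraction argument.
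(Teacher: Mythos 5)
Your decomposition (discretization error that vanishes as $|\tilde S|,|D|\to\infty$, plus a BO action-selection error accumulated through the Bellman recursion with a $\frac{1}{1-\gamma}$ geometric factor, combined by a triangle inequality) is exactly the skeleton of the paper's proof: the paper's Claim 1.2 proves the $\frac{\nu}{1-\gamma}\sqrt{2\eta_T/(T\log(1+\sigma^2))}$ bound against an exact-optimizer asynchronous DP reference by invoking \cite[Theorem 3.1]{wang2016est} per update and summing the discounted error series, and its Claim 1.3 sends the discretization error to zero in the limit. So the quantitative core of your plan matches.

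However, there is a genuine missing piece: you never establish that the algorithm's state-sampling loop terminates, i.e.\ that \textsc{SampleStates} in Alg.~\ref{alg:statesample} produces, in finite expected time, a set $\tilde S$ containing a state of $\mathcal G$. This is not a technicality one can absorb into ``RRT covers $S$ with low dispersion'': the loop explicitly runs \emph{until} $\mathcal G\cap\tilde S\neq\emptyset$, so without this claim RTDP may never be invoked, $\mathcal K<\infty$ is vacuous, and $B_{\mathcal K}$ need not contain any goal state. The paper devotes its first (and longest) claim to exactly this, and this is where the hypothesis that $Q_s$ is locally continuous at $\argmax_{a\in A}Q_s(a)$ is actually used: local continuity, together with the reward design making $Q_s(\pi^*(s))>0$ and the bound $R(s'\mid s,a)+\gamma^{\Delta t}V^*(s')\leq C_a/(1-\gamma^{\mathcal T})<0$ off the good set, forces a ball of actions around $\pi^*(s)$ of positive measure $\psi(A_s)>0$ whose transitions put positive mass on $S_{good}$; a Voronoi argument on the current RRT vertices then lower-bounds the per-iteration probability of extending toward the goal, giving finite expected sampling time. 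In your proposal the local-continuity assumption is instead spent on ``passing from discrete sums back to the integral defining $V^*$,'' which is not where it is needed (the paper's Claim 1.3 handles that limit by a covering/uniform-sampling argument requiring no such continuity). So the concrete fix is: add a lemma proving finite expected termination of the sampling stage along the paper's lines, and redirect the continuity hypothesis to that lemma rather than to the discretization limit.
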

\begin{proof}
To prove Thm.~\ref{thm:ebd}, we first show the following facts:  (1) The state sampling procedure in Alg.~\ref{alg:statesample} stops in finite steps; (2) the difference between the value function computed by \ALGNAME and the optimal value function computed via asynchronous dynamic programming with an exact optimizer is bounded in expectation; %(3) the value function of RTDP with optimizations solved exactly converges to the optimal value function of $\tilde{ \mathcal M}$; 
(3) the optimal value function of the approximated MDP $\tilde{ \mathcal M}$ asymptotically converges to that of the original problem defined by the MDP $\mathcal M$.

\begin{claim} %If there exists a policy whose relevant set intersects with $\mathcal G$ and $\forall s\in S, Q_s(a)$ is locally continuous at its optimum along both $s$ and $a$, there exists a set of infinite number of polices, $\Pi$, whose relevant sets all intersect with $\mathcal G$. 
The expected number of iterations for the set of sampled states $\tilde S$ computed by Alg.~\ref{alg:statesample} to contain one state in $\mathcal G$ is finite.
\label{claim1}
\end{claim}

\begin{claimproof}
%Our transition model is stochastic, but we can view it as deterministic by augmenting the action space. We define the new action space to be $A'\subseteq A\times S$, such that given the current state $s\in S$, $\forall a'=(a,s')\in A'$, $a'$ is composed of the action and the next state and satisfies $p(s'|s,a)>0$. The new transition model $p_{s'|s,a'}$ is then deterministic.%, and we can apply theoretical results from RRT for kinodynamic planning~\cite{lavalle2001randomized}. 
 %By assumption, %because there exists a policy leading to the goal region $\mathcal G$, 
 %there exists an optimal policy $\pi^*$  whose relevant set $B\subseteq S$ satisfies $\mathcal G\cap B \neq \emptyset$. Because any state in $B$ is reachable with finite actions, there exists an integer $K$ so that any state in $\mathcal G\cap B$ is reachable via no more than $K$ actions. 
 Let $S_{good}$ be the set of states with non-zero probability to reach the goal region via finite actions.  Clearly, $\mathcal G\subset S_{good}$. Because there exists a state in the goal region that is reachable with finite steps following the policy $\pi^*$ starting from $s_0$, we have $s_0\in S_{good}$. Hence $\tilde S\cap S_{good}$ is non-empty in any iteration of \textsc{SampleInteriorStates} of Alg.~\ref{alg:statesample}. We can show that if the nearest state selected in Line~\ref{alg1uniform} of Alg.~\ref{alg:statesample} is in $S_{good}$, there is non-zero probability to extend another state in $S_{good}$ with the RRT procedure. To prove this, we first show for every state $s\in S_{good}\cap \tilde S$ there exists a set of actions with non-zero measure, in which each action $a$ satisfies $\int_{S_{good}}p_{s'\mid s,a}(s'\mid s,a) \dif s' >0$.%, since we uniformly sample actions $a$ in Line~\ref{alg1uniforma} of Alg.~\ref{alg:statesample} and sample the extended state from $p_{s'\mid s,a}(\cdot \mid s, a)$ in Line~\ref{alg:rrtuni} of Alg.~\ref{alg:statesample}.
 
 %such that $S_0=\{s_0\}$, $S_K\subset \mathcal G$, and there is non-zero probability for Alg.~\ref{alg:statesample} to sample a state in $S_k$ given $S_{k-1}\cap \tilde S\neq 0, k = 1,2,\cdots, K$. We prove by 
 
For every state $s \in S_{good}\cap \tilde S$, because $Q_{s}(a)$ is locally continuous at $a=\pi^*(s)$, for any positive real number $\zeta$, there exists a positive real number $\delta$ such that $\forall a\in \{a: g(a,\pi^*(s))<\delta, a\in A\}$, we have 
 \begin{align}
 |Q_s(a) - Q_{s}(\pi^*(s))| < \zeta.\label{eq:qspi}
 \end{align} 
 Notice that by the design of the reward function $R$, we have
 \begin{align}
Q_s(a) & =\int_{ S_{good}\cup(S\setminus S_{good})} p_{s' \mid s,a}(s' \mid s,a)
  \left(R(s'\mid s,a) + \gamma^{\Delta t} V^*(s')\right) \dif s'  \\
  &\leq \int_{S_{good}} p_{s' \mid s,a}(s' \mid s,a)
  \left(R(s'\mid s,a) + \gamma^{\Delta t} V^*(s')\right) \dif s' \nonumber \\
  & \;\; + \frac{C_a}{1-\gamma^{\mathcal T} } \int_{S\setminus S_{good}} p_{s' \mid s,a}(s' \mid s,a)
   \dif s' 
\end{align}
where $-C_a > 0$ is the smallest cost for either executing one action or colliding with obstacles. The inequality is because $\forall s'\in S\setminus S_{good}$,  
\begin{align}
R(s'\mid s,a) + \gamma^{\Delta t} V^*(s') \leq  \frac{C_a }{1-\gamma^{\mathcal T} } < 0.
\end{align} %We next show that there is non-zero probability to land in $S_{good}$ from state $s$ with action $a$. 

Because $\pi^*(s)=\argmax_{a\in A} Q_s(a)$ and the reward for the goal region is set large enough so that $Q_s(\pi^*(s)) >  0$, 
 there exists $r > 0$ such that $\forall q\in \R$ satisfying $q > Q_s(\pi^*(s))-r$, we have $q > 0$. 
 Let the arbitrary choice of $\zeta$ in Eq.~\eqref{eq:qspi} be $\zeta = r$. Because $Q_s(\pi^*(s)) - \zeta = Q_s(\pi^*(s)) - r < Q_s(a)$, we have  $$Q_s(a) >  0, \forall a\in \{a: g(a,\pi^*(s))<\delta, a\in A\},$$  and 
$$\int_{S_{good}} p_{s' \mid s,a}(s' \mid s,a)
  \left(R(s'\mid s,a) + \gamma^{\Delta t} V^*(s')\right) \dif s' > -\frac{C_a}{1-\gamma^{\mathcal T} } \int_{S\setminus S_{good}} p_{s' \mid s,a}(s' \mid s,a)
   \dif s' > 0$$
%  $$ satisfying $g(a,\pi^*(s))<\delta$ and $\Delta t \leq \Delta t^{\pi^*}$\footnote{More rigorously, we need to assume $\Delta t^{\pi^*} > 1$ so $1\leq \Delta t\leq \Delta t^{\pi^*}$ can be sampled with non-zero probability via uniform sampling. But this is a small problem since we can always sample in the region $(1-\epsilon, \mathcal T+\epsilon)$, for some $\epsilon > 0$}. 
Hence $\int_{s'\in S_{good}} p_{s' \mid s,a}(s' \mid s,a)\dif s' > 0$ must hold for any action $a\in \{a: g(a,\pi^*(s))<\delta, a\in A\}$. Recall that one dimension of $a = (u, \Delta t)$ is the duration $\Delta t$ of the control $u$. Because the dynamics of the physics world is continuous, for any $a = (u, \Delta t')\in A$ such that $g((u,\Delta t),\pi^*(s))<\delta$ and $1 \leq\Delta t' \leq \Delta t$, we have $\int_{S_{good}}p_{s'\mid s,a}(s'\mid s,a) \dif s' >0$. %Since $A_s\in A_s'$, $\psi(A_s') > 0$.
Let  $A_s = \{(u,\Delta t'): g((u,\Delta t),\pi^*(s))<\delta, 1 \leq\Delta t' \leq \Delta t\}\cap A$. 

What remains to be shown is $\psi(A_s) > 0$. Because there exists an open set $A^o$ such that $A$ is the closure of $A^o$, $\pi^*(s)$ is either in $A^o$ or a limit point of $A^o$. If $\pi^*(s)$ is in the open set $A^o$, there exist $0<\delta' \leq \delta$ such that $\{a: g(a,\pi^*(s))<\delta'\} \subset A$, and so we also have $\psi(A_s) > 0$. If $\pi^*(s)$ is a limit point of the open set $A^o$, there exist $a'\in A^o$ such that $g(a', \pi^*(s)) < \delta/2$ and $a'\neq \pi^*(s)$. Because $a'\in A^o$, there exist $0< \delta' \leq \delta/2$ such that $\{a:g(a', a) < \delta'\}\subset A$. For any $a \in \{a:g(a', a) < \delta'\}$, we have $g(a, \pi^*(s)) \leq g(a,a') + g(a',\pi^*(s)) < \delta' + \delta/2 \leq \delta$. Hence $\{a:g(a', a) < \delta'\} \subset A_s$, and so $\psi(A_s) > 0$. Thus for any $\pi^*(s)\in A$, we have $\psi(A_s) > 0$.

So, for every state $s\in S_{good}\cap \tilde S$ and action $a\in A_s$ with $\psi(A_s) > 0$,  $\int_{S_{good}}p_{s'\mid s,a}(s'\mid s,a) \dif s' >0$. 
As a corollary, $\rho(\{s':p(s'\mid s,a)>0\}\cap S_{good}) > 0$ holds $\forall s\in S_{good}\cap \tilde S, a\in A_s$. %Since we uniformly sample a finite number of actions $a$ in Line~\ref{alg1uniforma} of Alg.~\ref{alg:statesample}, $\psi(A_s) > 0$ ensures that  and sample one extended state from $p_{s'\mid s,a}(\cdot \mid s, a)$ in Line~\ref{alg:rrtuni} of Alg.~\ref{alg:statesample}, if the state $s\in S_{good} \cap $
\hide{
 Because the number of actions taken to get to the goal region is finite, there exists a longest path with $K$ states under policy $\pi^*$: $s_0,s_1,s_2,\cdots,s_K$ with $s_K\in \mathcal G$. If there exists a longer path, we can simply replace the selected path with the longer one. $\forall k=0,1,\cdots,K-1$, since $Q_{s_k}(a)$ is locally continuous at $a=\pi^*(s)$, for any positive real number $\zeta$, there exists a positive real number $\delta$ such that $\forall a\in A, s\in S$ satisfying $g(a,\pi^*(s))<\delta$ and $d(s_k,s)<\delta$, we have $|Q_s(a) - Q_{s_k}(\pi^*(s_k))| < \zeta$. Let $F_k$ denote the set of states which requires at most $K-k$ actions to reach the goal region with probability at least $\theta$ ($\theta$ is chosen to ensure $s_k\in F_k$), $k=0,\cdots, K$. We show that $\rho(F_k)>0, \forall k$ by induction. When $k=K-1$, $\forall a\in A, s\in S$ satisfying $g(a,\pi^*(s))<\delta$ and $d(s_{K-1},s)<\delta$, we have 
\begin{align}
Q_s(a) & =\int_{s'\in S\setminus \mathcal G} p_{s' \mid s,a}(s' \mid s,a)
  \left(R(s'\mid s,a) + \gamma^{\Delta t} V^\pi(s')\right) \dif s' + \int_{s'\in \mathcal G} p_{s' \mid s,a}(s' \mid s,a) R_{goal} \dif s' 
\end{align}
where $R_{goal} > 0$ is the reward for going to the goal region. Because the longest path to the goal is $K$ steps, if the next state $s'$ given the current state $s$ and action $a$ is not in the goal region, it will not be able to reach the goal forever. Hence, the best policy for it is to stay off the obstacles and incur the action cost $C_a <0$:
\begin{align}
Q_s(a) & = \underbrace{\int_{s'\in S\setminus \mathcal G} p_{s' \mid s,a}(s' \mid s,a)
  \left( C_a + \frac{C_a \gamma^{\Delta t}}{1-\gamma^{\max \Delta t}} \right) \dif s' }_{Q'_s(a)}+ \underbrace{\int_{s'\in \mathcal G} p_{s' \mid s,a}(s' \mid s,a) R_{goal} \dif s' }_{Q''_s(a)}
\end{align}
Because $|Q_s(a) - Q_{s_{K-1}}(\pi^*(s_{K-1}))| < \zeta$, we have 
\begin{align}
Q_s(a) &> Q_{s_{K-1}}(\pi^*(s_{K-1})) -\zeta \\
Q'_s(a)+Q''_s(a) &> Q'_{s_{K-1}}(\pi^*(s_{K-1})) + Q''_{s_{K-1}}(\pi^*(s_{K-1})) -\zeta
\end{align}
If $Q''_s(a)>0$, let $\theta \leq \int_{s'\in \mathcal G} p_{s' \mid s,a}(s' \mid s,a) \dif s', \forall s,a$, and we have $\{s \mid d(s_{K-1},s)<\delta\}\subseteq F_{K-1}$, $\rho(F_{K-1})>0$. If $Q''_s(a)=0$, because all the probability mass is shifted to non goal region, we must have $Q'_s(a) < Q'_{s_{K-1}}(\pi^*(s_{K-1}))$; however, $\zeta$ can be chosen small enough such that $Q''_s(a)> Q'_{s_{K-1}}(\pi^*(s_{K-1})) + Q''_{s_{K-1}}(\pi^*(s_{K-1})) -\zeta - Q'_s(a) > 0$, which creates a contradiction. In all, $\rho(F_{K-1})>0$ and $\int_{s'\in \mathcal G} p_{s' \mid s,a}(s' \mid s,a) \dif s' > 0$.

Assume $\rho(F_{k+1}) > 0$. $\forall a\in A, s\in S$ satisfying $g(a,\pi^*(s))<\delta$ and $d(s_{K-1},s)<\delta$, we have 
\begin{align}
Q_s(a) & = \int_{s'\in S\setminus \mathcal F_{k+1}} p_{s' \mid s,a}(s' \mid s,a)
  \left(R(s'\mid s,a) + \gamma^{\Delta t} V^\pi(s')\right) \dif s' \nonumber \\
  &  + \int_{s'\in \mathcal F_{k+1}} p_{s' \mid s,a}(s' \mid s,a)  \left(R(s'\mid s,a) + \gamma^{\Delta t} V^\pi(s')\right) \dif s' 
\end{align}
Similar to the arguments made when $k=K-1$, we can get $\rho(F_{k}) > 0$ and $\int_{s'\in \mathcal F_{k+1}} p_{s' \mid s,a}(s' \mid s,a) \dif s' > 0$.
%In other words, $\forall s\in S$, if there exists a path to the goal from $s$, there exists a set of actions with non-zero measure such that they can each be on different near-optimal paths to the goal. Hence, there exists a set of infinite number of polices, $\Pi$, whose relevant sets contains at least a state in the goal region $\mathcal M$. %We denote the union of these relevant sets as $B_{\Pi} = \{s\in B, B\text{ is $\pi$-relevant}, pi \in \Pi\}$.
} % end of hide

Now we can show that there is non-zero probability to extend a state on the near-optimal path in $S_{good}$ with the RRT procedure in one iteration of \textsc{SampleInteriorStates} in Alg.~\ref{alg:statesample}.
 %We denote the set of sampled states with $k$ number of states in $B_{\pi}$ for some $\pi\in \Pi$ as $\tilde S_k$ (start with $\tilde S_0=\{s_0\}$) and 
%We denote the set of good actions (actions on the path to goal) as $A_{sk}$ for state $s\in \tilde S$. We have shown that $\psi(A_{sk}) > 0, \forall s\in \tilde S_k$. If we have constructed $\tilde S_{k}$, we can also construct a Voronoi diagram based on the vertices from $\tilde S_{k}$. There exists a Voronoi region associated with the k-th state $s_k$ on the path led by the policy $\pi$. We then partition this Voronoi region to one part, $\Vor(A_{s_kk})$, associated with states generated by actions in $A_{s_kk}$ and another part, $\Vor(A\setminus A_{s_kk})$ associated with states generated by actions in $A\setminus A_{s_kk}$. We denote $p_s=\min_k \frac{\rho(\Vor(A_{s_kk}))}{\rho(S)}$ and $p_a=\min_k \frac{\psi(A_{s_kk})}{\psi(A)}$. By the definition of \emph{relevant} set, the goal state in $B_\pi$ is reachable within finite number of steps, and we denote it as $K$. Hence, with at most a finite number of $\frac{K}{\theta p_s p_a}$ iterations in expectation (including both loops for sampling actions and loops for sampling states), at least a goal state is in $\tilde S$.
%We denote the set of good actions (actions on the path to goal) as $A_{s}$ for state $s\in \tilde S$. We have shown that $\psi(A_{s}) > 0, \forall s\in F_0\cap \tilde S$. 
%Note that because $s_0\in \tilde S$ and $s_0\in S_{good}$, $\tilde S\cap S_{good}\neq \emptyset$. 
Let $\theta = \min_{s\in \tilde S\cap S_{good}, a\in A_s} \rho(\{s':p(s'\mid s,a)>0\}\cap S_{good}) > 0$. 
With the finite $\tilde S$ for some iteration, we can construct a Voronoi diagram based on the vertices from the current set of sampled states $\tilde S$. $\forall  s\in S_{good}\cap \tilde S$, there exists a Voronoi region $\Vor(s)$ associated with state $s$. We can partition this Voronoi region  $\Vor(s)$ to one part, $\Vor(A_{s})\subset \Vor(s)$, containing states in $S_{good}$ generated by actions in $A_{s}$ and its complement, $\Vor(A\setminus A_{s}) = \Vor(s) \setminus \Vor(A_s)$. %, containing states generated by actions in $A\setminus A_{s}$. 
Notice that $A_s$ includes actions with the minimum duration, and the unit for the minimum duration can be set small enough so that $\rho(\{s':p(s'\mid s,a)>0, a\in A_s, s'\in S_{good}\}\cap \Vor(s) ) > 0$\footnote{This is because $\Vor(s)$ is a neighborhood of $s$, and there exists an action $a\in A_s$ such that a next state $s'\in S_{good}$ is in the interior of $\Vor(s)$ given the current state $s$. So there exists a small ball in $S$ with $s'$ as the center such that this ball is a subset of both $\Vor(s)$ and $\{s':p(s'\mid s,a)>0, a\in A_s, s'\in S_{good}\}$ (by the continuity of $p_{s'\mid s, a}$). }. Since  $\{s':p(s'\mid s,a)>0, a\in A_s\}\cap S_{good}\cap \Vor(s)\subset \Vor(A_{s})$, we have $\rho(\Vor(A_{s})) > 0, \forall s\in \tilde S$. We denote $p_s=\min_s \frac{\rho(\Vor(A_{s}))}{\rho(S)} > 0$ and $p_a=\min_s \frac{\psi(A_{s})}{\psi(A)} > 0$. With probability at least $p_s$, there is a random state sampled in $\Vor(A_{s})$ in this iteration. With probability at least $p_a$, at least an action in $A_s$ is selected to test distance, and with probability at least $\theta$, a state in $S_{good}$ can be sampled from the transition model conditioned on the state $s$ and the selected action in $A_s$. %$\rho(F_{k+1})>0$ and $\int_{s'\in \mathcal G} p_{s' \mid s,a}(s' \mid s,a) \dif s' > 0, \forall s\in F_{k}, a\in A_s$. 
 
 Next we show that \textsc{SampleInteriorStates} in Alg.~\ref{alg:statesample} constructs an RRT whose finite set of sampled states $\tilde S$ contains at least one goal state in expectation.
 
By assumption, the goal state is reachable with finite actions. For any $s\in S_{good} \cap \tilde S$, the goal region is reachable from $s$ in finite steps. Notice that once a new state in $\{s':p(s'\mid s,a)>0,a\in A_s\}\cap S_{good}$ is sampled, $s'$ uses one less step than $s$ to reach the goal region. Let $K$ be the largest finite number of actions necessary to reach the goal region $\mathcal G$ from the initial state $s_0$. Hence, with at most a finite number of $\frac{K}{\theta p_s p_a}$ iterations in expectation (including both loops for sampling actions and loops for sampling states), at least a goal state will be added to $\tilde S$.

\hfill Q.E.D.
\end{claimproof}
\\

\begin{claim}
%Alg.~\ref{alg:rtdp} converges in finite iterations with bounded error. 
  Let $\tilde V^*$ be the optimal value function computed via asynchronous dynamic programming with an exact optimizer. If Alg.~\ref{alg:rtdp} converges with $\mathcal K<\infty$ iterations,  $$ |\hat V_{\mathcal K } (\cdot)- \tilde V^*(\cdot)|_{B_{\mathcal K}} \leq  \frac{\nu}{1-\gamma}\sqrt{\frac{2\eta_T}{T\log (1+\sigma^2)} }.$$
\end{claim}
 \begin{claimproof}
The RTDP process of \ALGNAME in Alg.~\ref{alg:rtdp} searches for the relevant set $B_{\mathcal K}$ of \ALGNAME's policy $\hat \pi$ via recursion on stochastic paths (trials). If \ALGNAME converges, all states in $B_{\hat \pi}$ should have been visited and their values $\hat V(s), \forall s\in B_{\mathcal K}$ have converged. Compared to asynchronous dynamic programming (ADP) with an exact optimizer, our RTDP process introduces small errors at each trial, but eventually the difference between the optimal value function computed by ADP and the value function computed by \ALGNAME is bounded.

In the following, the order of states to be updated in ADP is set to follow RTDP. This order does not matter for the convergence of ADP as any state in $B_{\hat \pi}$ will eventually be visited infinitely often if $\mathcal K\rightarrow \infty$~\cite{sutton1998reinforcement}. %enough times before \ALGNAME converges. %infinitely often as $\mathcal K\rightarrow \infty$. 
We denote the value for the $i$-th state updated at iteration $k$ of RTDP to be $\hat V_{ki}$, the corresponding value function updated by RTDP with an exact optimizer only for this update to be $\hat V^*_{ki}$, and the difference between them to be $\epsilon_{ki} = |\hat V^*_{ki} - \hat V_{ki}|$. %be the $L_\infty$ norm between $\hat V_k$ and $\tilde V_k=\max_{a\in
%  A}Q_{s,k}(a)$  at iteration $k$.

 According to \cite[Theorem
3.1]{wang2016est}, in expectation, for any $i$-th state $s_{ki}$ to be updated at iteration $k$, the following inequality holds:
\begin{align*} \label{eq:vk}
\epsilon_{ki} \leq \nu_{ki} \sqrt{\frac{2 \eta_T}{T\log (1+\sigma^2)} },
\end{align*} 
where $$\nu_{ki} = \underset{t\in[1,T]}{\max}\min_{a\in A} G_{s_i,t,k}(a),$$  and $$G_{s_i,t,\cdot}(a) = \frac{h_u(s_i) - \mu_{t-1}(a)}{\sigma_{t-1}(a)}$$ is the acquisition function in \cite[Theorem 3.1]{wang2016est}, which makes use of the assumed upper bound $h_u(\cdot)$ on the value function. %Hence for all the states in $\tilde S$ and all the  \mathfrak K  iterations, union bound gives at least $1-\delta$ probability that $\forall k\in[1, \mathfrak K ]$, Eq.~\eqref{eq:vk} holds.
Let the sequence of states to be updated at iteration $k$ be $s_{k1},s_{k2},\cdots,s_{kn_k}$ and $\nu = \max_{k\in[1, \mathcal K ],i\in[1,n_k]} \nu_{ki}$. For any iteration $k$ and state $s_{ki}$, we have 
$$\epsilon_{ki} \leq \nu \sqrt{\frac{2 \eta_T}{T\log (1+\sigma^2)} } = \epsilon.$$
So our optimization introduces error of at most $\epsilon$ to the
optimization of the Bellman equation at any iteration. Furthermore, we can bound the difference between the value for the $i$-th state updated at iteration $k$ of RTDP ($\hat V_{ki}$) and the corresponding value function updated by ADP ($\tilde V^*_{ki}$). More specifically, the following inequalities hold for any $\mathcal K=1,2,\cdots, \infty$: 
\begin{align*}
|\hat V_{11} - \tilde V^*_{11}| &\leq \epsilon,\\
|\hat V_{12} - \tilde V^*_{12}| &\leq \epsilon + \gamma \epsilon,\\
\cdots &,\\
|\hat V_{1n_1} - \tilde V^*_{1n_1}| &\leq \sum_{i=1}^{n_1} \gamma^{i-1}\epsilon,\\ %< \frac{\epsilon}{1-\gamma} = \frac{\nu}{1-\gamma}\sqrt{\frac{2\eta_T}{T\log (1+\sigma^2)} },\\
\cdots &,\\
\cdots &,\\
|\hat V_{\mathcal K1} - \tilde V^*_{\mathcal K1}| &\leq \epsilon + \gamma\sum_{i=1}^{n_1+\cdots +n_{\mathcal K -1}} \gamma^{i-1}\epsilon,\\
|\hat V_{\mathcal K2} - \tilde V^*_{\mathcal K2}| &\leq \epsilon + \gamma \epsilon + \gamma^2\sum_{i=1}^{n_1+\cdots +n_{\mathcal K -1}} \gamma^{i-1}\epsilon,\\
\cdots &,\\
|\hat V_{\mathcal K n_{\mathcal K}} - \tilde V^*_{\mathcal Kn_{\mathcal K}}| &\leq \sum_{i=1}^{n_1+\cdots +n_{\mathcal K}} \gamma^{i-1}\epsilon  < \frac{\epsilon}{1-\gamma} = \frac{\nu}{1-\gamma}\sqrt{\frac{2\eta_T}{T\log (1+\sigma^2)} }.
\end{align*}

Notice that $\hat V_{ki}$ converges because it monotonically decreases for any state index $i$ and it is lower bounded by $\left(\frac{C }{1-\gamma^{\mathcal T} }\right)$ where $C<0$ is the highest cost, e.g. colliding with obstacles. Since we use Cauchy's convergence test~\cite{cauchy1821cours}, we can set the threshold for the convergence test to be negligible. Hence for some $\mathcal K<\infty$, both $\hat V_k$ and $\tilde V_k$ converge according to Cauchy's convergence test, and we have %$\tilde V^* = \tilde V^*_{\mathcal K n_{\mathcal K}}$. Hence, 
$$ |\hat V_{\mathcal K } (\cdot)- \tilde V^*(\cdot)|_{B_{\mathcal K}} \leq  \frac{\nu}{1-\gamma}\sqrt{\frac{2\eta_T}{T\log (1+\sigma^2)} }.$$
\hfill Q.E.D.
\end{claimproof}
\\

\begin{claim} $\tilde V^*$, the optimal value function of the approximated MDP $\tilde{ \mathcal M}$, asymptotically converges to $V^*$, the optimal value function of the original problem defined by the MDP $\mathcal M$.
\end{claim}
\begin{claimproof}
We consider the asymptotic case where the size of the dataset $|D|\rightarrow \infty$ and the number of states sampled $|\tilde S|\rightarrow \infty$. Notice that these two limit does not contradict Claim 1.1 because \ALGNAME operates in a loop and we can iteratively sample more states by calling \textsc{SampleStates} in Alg.~\ref{alg:boidp}. When $|D|\rightarrow \infty$, $p_{s'|s,a}$ converges to the true transition model.

%Because of the low dispersion of state sampling via RRT shown by~\cite[Lemma 6]{huynh2016incremental} and
Because the states are sampled uniformly randomly from the state space $S$ in Line~\ref{alg1uniform} of Alg.~\ref{alg:statesample}, when $|\tilde S|\rightarrow \infty$, the set $\tilde S$ can be viewed as uniform random samples from the reachable state space\footnote{Alg.~\ref{alg:statesample} does not necessarily lead to uniform samples of states in the state space. However, as the number of states sampled approaches infinity, $|\tilde S|\rightarrow \infty$, we can construct a set of finite and arbitrarily small open balls that cover the (reachable) state space such that there exists at least one sampled state in any of those balls. Such a cover exists because the state space is compact. If the samples are not uniform, we can simply adopt a uniform sampler on top of Alg.~\ref{alg:statesample}, and throw away a fixed proportion of states so that the remaining set of states are uniform samples from the state space $S$.}. So the value function for the optimal policy of $\tilde{\mathcal M}$ asymptotically converges to that of $\mathcal M$:
\begin{align*}
\lim_{|\tilde S|,|D|\rightarrow \infty } |\tilde V^* - V^*|_{\infty} = 0.
\end{align*}
 \hfill Q.E.D.
\end{claimproof}

Thm.~\ref{thm:ebd} directly follows Claim 1.1, 1.2, and 1.3. By the triangle inequality of $L_\infty$, we have
\begin{align*}
\lim_{|\tilde S|,|D|\rightarrow \infty } |\hat V_{\mathcal K} - V^*|_{B_{\mathcal K}}  &\leq \lim_{|\tilde S|,|D|\rightarrow \infty } |\hat V_{\mathcal K} - \tilde V^*|_{B_{\mathcal K}} + \lim_{|\tilde S|,|D|\rightarrow \infty }  |\tilde V^* - V^*|_{B_{\mathcal K}} \\
& \leq \frac{\nu}{1-\gamma}\sqrt{\frac{2\eta_T}{T\log (1+\sigma^2)} }
\end{align*}
holds in expectation.
\end{proof}
%  which samples states from the transition model infinitely often and the error bounds for BO~\cite{wang2016est}. The discount factor $\gamma$ enables \ALGNAME to have bounded accumulated errors from the iterative Bellman updates. 
%We prove Thm.~\ref{thm:ebd} in the appendix. %Thm.~\ref{thm:ebd} suggests that the error bound vanishes sub-linearly to the number of actions we select in the SVI. 
\section{Implementation and Experiments}
\label{sec:exp}

We tested our approach in a quasi-static problem, in which a robot
pushes a circular object through a planar workspace with obstacles in
simulation\footnote{All experiments were run with Python 2.7.6 on
  Intel(R) Xeon(R) CPU E5-2680 v3 @ 2.50GHz with 64GB memory.}. We
represent the action by the robot's initial relative position $x$ to
the object (its distance to the object center is fixed), the direction
of the push $z$, and the duration of the push $\Delta t$, which are
illustrated in Fig.~\ref{fig:push}.  The companion video shows the
behavior of this robot, controlled by a policy derived by \ALGNAME
from a set of training examples.
%
%Without loss of generality, we assume the same
%reward for all states in the goal region, and the same cost for all
%states on the boundary of the free space (e.g., states in which the
%robot collides). 

In this problem, the basic underlying dynamics in free space with no
obstacles are location invariant; that is, that the change in state
$\Delta s$ resulting from taking action $a=(u,\Delta t)$ is independent of the
state $s$ in which $a$ was executed. 
We are given a training dataset $D = \{\Delta s_i, a_i\}_{i=0}^N$, where
$a_i$ is an action and $\Delta s_i$ is the resulting state change, 
collected in the free space in a simulator. Given a new query for action $a$, we predict the distribution of
$\Delta s$ by looking at the subset
$D' = \{\Delta s_j, a_j\}_{j=0}^M \subseteq D$ whose actions $a_j$ are
the most similar to $a$ (in our experiments we use 1-norm distance to
measure similarity), and fit a Gaussian mixture model on
$\Delta s_j$ using the EM algorithm, yielding
an estimated continuous state-action transition model
$p_{s'\mid s,a}(s + \Delta_s\mid s,a) = p_{\Delta s \mid a}(\Delta s \mid a).$
We use the Bayesian information criterion (BIC) to determine the
number of mixture components.
%in which Jacobian of the state does not have effect on the next state. 
\begin{figure}[t]
        \centering
        \includegraphics[width=0.6\textwidth]{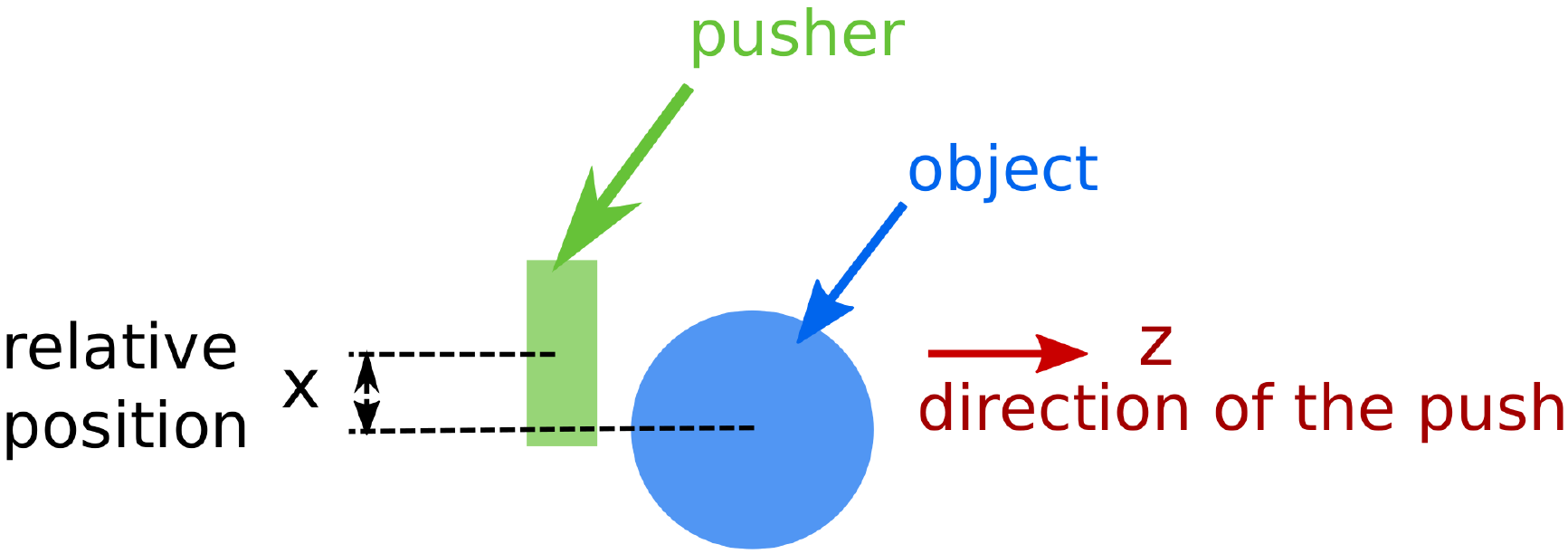}
        \caption{Pushing a circular object with a rectangle pusher. }%The action is a quasi-static push, represented by the
%robot's initial relative position $x$ to the object, the direction of the push $z$, and the duration
%of the push $\Delta t$.} 
        \label{fig:push}
\end{figure}

%We used a pseudo-random Halton generator~\cite{halton1960efficiency,lavalle2006planning} for state sampling in the RRT.  To reduce the number of sampled states very close to obstacles, we randomly rejected $50\%$ of states that collided with obstacles.

\subsection{Importance of learning accurate models}
\label{ssec:learning}
%(b) Samples of 10 trajectories for planning with the
%          single-mode Gaussian transition model; (c) Samples of
%          trajectories for planning with the two component mixture of
%          Gaussians transition model. Planning with the correct model
%          tends to find better trajectories, while because the
%          single-mode Gaussian puts density on many states that the
%          true model does not reach, the policy in (b) causes the
%          robot to do extra maneuvers to avoid obstacles that it
%          actually has very low probability of hitting.

Our method was designed to be appropriate for use in systems whose dynamics are not well modeled with uni-modal Gaussian noise. The experiments in this section explore the question of
whether a uni-modal model could work just as well, using a simple
domain with known dynamics $s' = s + T(a)\rho$, where the relative
position $x=0$ and duration $\Delta t=1$ are fixed,  
the action is the direction of motion, $a=z\in[0,2\pi)$, $T(a)$ is the
rotation matrix for angle,\hide{$\begin{bmatrix} 
\cos(a) & \sin(a)\\
-\sin(a) & \cos(a)
\end{bmatrix}$,} and the noise is

\begin{small}
\begin{align*}
\rho \sim 0.6\mathcal N(\begin{bmatrix}
5.0\\
5.0
\end{bmatrix},\begin{bmatrix}
2.0 &0.0\\
0.0 & 2.0
\end{bmatrix})+0.4\mathcal N(\begin{bmatrix}
5.0\\
-5.0
\end{bmatrix},\begin{bmatrix}
2.0 & 0.0\\
0.0 & 2.0
\end{bmatrix}).
\end{align*}
\end{small}
 \begin{figure}[t]
        \centering
        \includegraphics[width=0.8\textwidth]{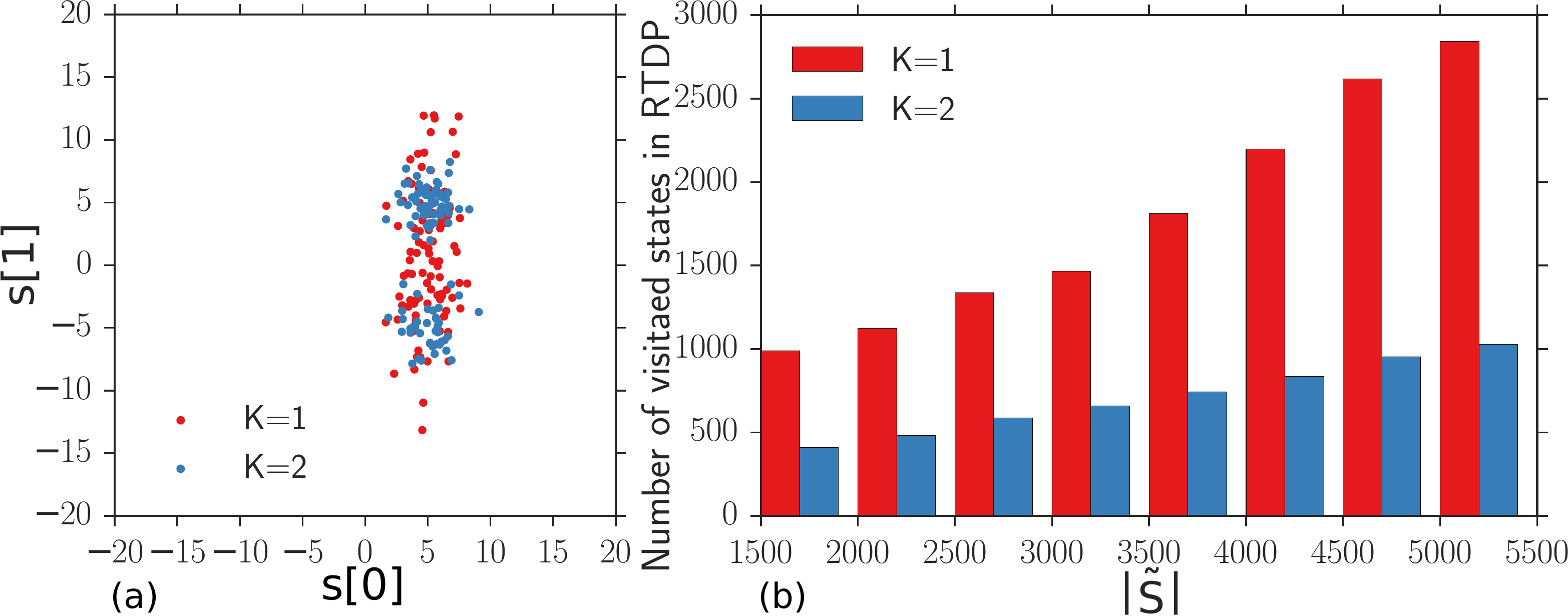}
        \caption{(a) Samples from the single-mode Gaussian  transition model ($K=1$) and the two-component  Gaussian mixture transition model ($K=2$) in the free space when $a=0$. (b) The number of visited states (y-axis) increases with the number of sampled states $|\tilde S|$ (x-axis).  Planning with $K=2$ visits fewer states in RTDP than with $K=1$.} 
        \label{fig:toysample}
\end{figure}
% \begin{wrapfigure}{r}{0.5\textwidth}
%         \centering
%        \includegraphics[width=0.5\textwidth]{figs/fig_rd_2_exp_0_usefake_0_limitpre_100_obst_0_Ncpu_48_rshape_rectangle_dataN_5000_mapid_10_sgid_0_maxK__gK_0_loadcache_0_model}
%         \caption{Samples from the single-mode Gaussian transition model ($K=1$) and the two component mixture of Gaussians transition model ($K=2$).}
%         \label{fig:toymodel}
% \end{wrapfigure}
% \begin{wrapfigure}{L}{0.5\textwidth}
%         \centering
%         \includegraphics[width=0.5\textwidth]{figs/fig_rd_2_exp_0_usefake_0_limitpre_100_obst_0_Ncpu_48_rshape_rectangle_dataN_5000_mapid_10_sgid_0_maxK__gK_0_loadcache_0_visited}
%               \caption{Planning with the single-mode Gaussian model ($K=1$) tends to visit more states than using two component mixture of Gaussians ($K=2$).}
%         \label{fig:toyvisited}
% \end{wrapfigure}

% Not beautiful;  maybe use subfig or something.  But the floating
% figs weren't working for me at all.  -lpk
\begin{figure}[t]
\centering
        \includegraphics[width=0.7\textwidth]{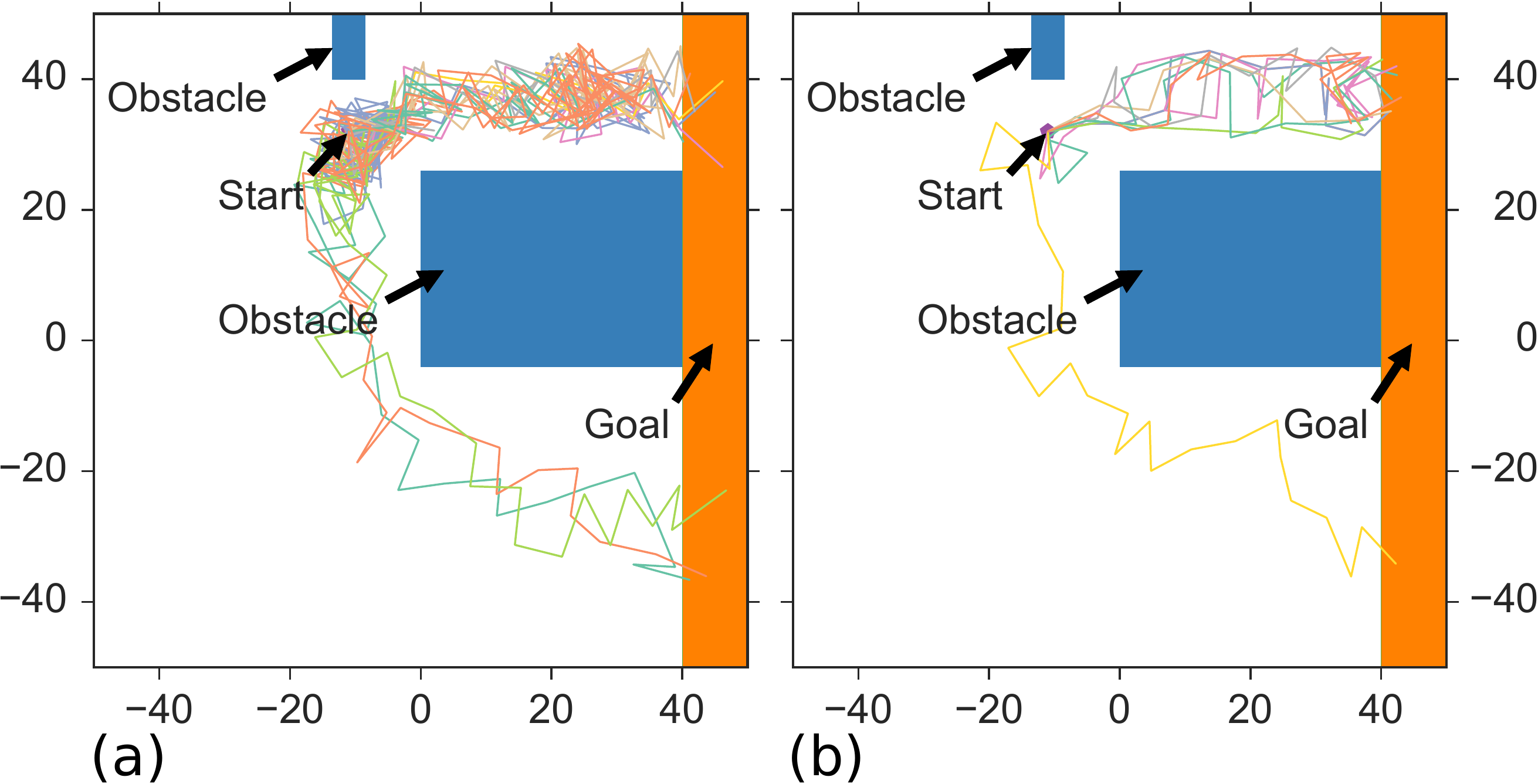}
        \caption{(a) Samples of 10 trajectories with $K=1$. (b) Samples of 10 trajectories  with $K=2$.  Using the
correct number of components for the transition model improves the quality of the trajectories.} 
        \label{fig:toytraj}
\end{figure}

\begin{figure}[t]
        \centering
        \includegraphics[width=0.8\textwidth]{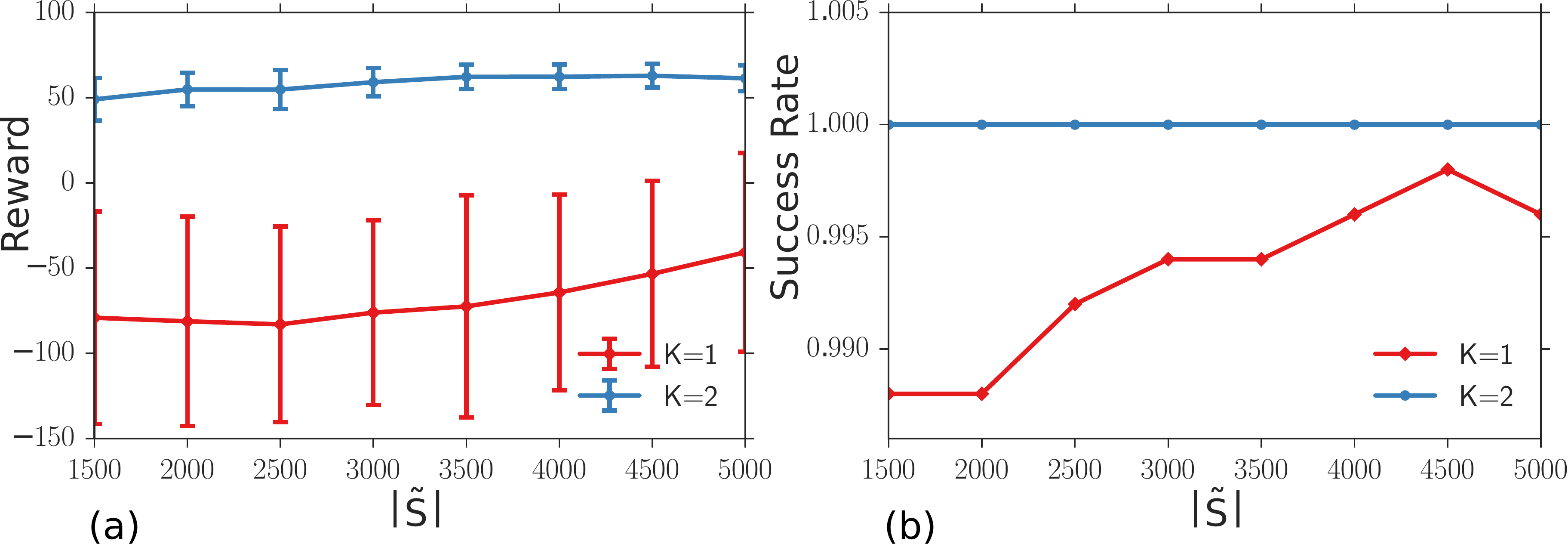}
        \caption{(a): Reward. (b): Success rate. Using two components ($K=2$) performs much better
          than using one component ($K=1$) in terms of reward and success rate.} %and visits many fewer
          %states in RTDP, because it matches the true model
          %of the system.}
        \label{fig:toyreward}
\end{figure}

We sample $\rho$ from its true distribution and fit a Gaussian ($K=1$)
and a mixture of Gaussians ($K=2$). The samples from $K=1$ and $K=2$
are shown in Fig.~\ref{fig:toysample}~(a).  We plan with both models where
each action has an instantaneous reward of $-1$, hitting an obstacle
has a reward of $-10$, and the goal region has a reward of $100$. The
discount factor $\gamma = 0.99$. To show that the results are
consistent, we use Algorithm~\ref{alg:statesample} to sample 1500 to
5000 states to construct $\tilde S$, and plan with each of them using
$100$ uniformly discretized actions within $1000$ iterations of RTDP. %We compute the Monte Carlo
%rewards with 500 simulations and show the reward and success rate
%comparison in Fig~\ref{fig:toyreward} (for each simulation, at most
%500 steps are allowed). 

To compute the Monte Carlo reward, we simulated 500 trajectories for
each computed policy with the true model dynamics, and for each
simulation, at most 500 steps are allowed. We show 10 samples of
trajectories for both 
$K=1$ and $K=2$ with $|\tilde S|=5000$, in Fig~\ref{fig:toytraj}.  Planning with the right model $K=2$ 
          tends to find better trajectories, while because $K=1$ puts density on many states that the
          true model does not reach, the policy of $K=1$ in Fig~\ref{fig:toytraj}~(a) causes the
          robot to do extra maneuvers or even choose a longer trajectory to avoid obstacles that it
         actually has very low probability of hitting. 
           As a result, the reward and success rate for $K=2$ are both higher than $K=1$, as shown in Fig.~\ref{fig:toyreward}.
Furthermore, because the single-mode Gaussian
estimates the noise to have a large variance, it causes RTDP to visit
many more states than necessary, as shown in
Fig.~\ref{fig:toysample}~(b). 
%Fig.~\ref{fig:toyreward}(c) also shows
% that RTDP allows us to visit much fewer states than the states we
% sampled.

% For $K=2$, the planner only visited around $10\%$ of all the
        % sampled states. 

%\lpk{Say something about these trajectories, since they are the first
 % ones we see.  Are these planned or actual trajectories?  Maybe say
 % also that we are not applying any smoothing?}

%\subsection{Focusing on the relevant states}

\subsection{Focusing on the good actions and states}
%Compare BO and random sampling for optimizing Q. Show BO is necessary.
In this section we demonstrate the effectiveness of our strategies for
limiting the number of states visited and actions modeled.
% Our planning method aims at solving continuous state-action MDPs where
% the transition model is expensive to compute. We use BO to focus on
% good actions and RTDP to focus on good states. We demonstrate the
% effectiveness of these improvements. 
We denote using Bayesian
optimization in Lines~\ref{opt1} 
and~\ref{opt2} in Algorithm~\ref{alg:rtdp} as BO and using random selections as
Rand. 
\begin{figure}[h]
        \centering
        \includegraphics[width=0.6\textwidth]{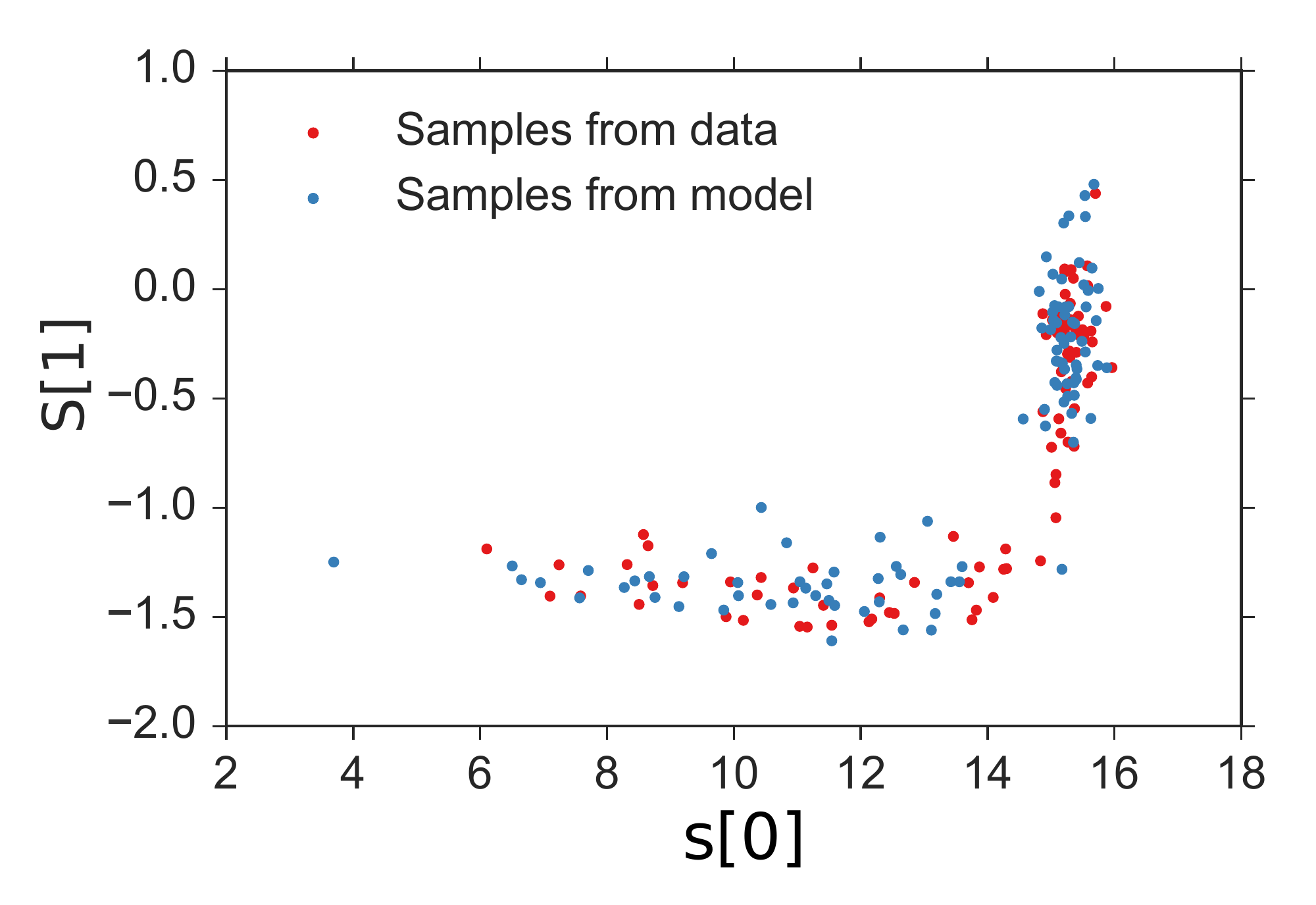}
        \caption{The conditional distribution of $\Delta s$ given
          $a=(z,x,\Delta t)=(0.0,0.3,2.0)$ is a multi-modal Gaussian. % The Gaussian
          % mixture computed by GMM matches the data well.
          }  
        \label{fig:pushmodel}
\end{figure}

We first demonstrate why BO is better than random for optimizing
$Q_s(a)$ with the simple example from Sec.~\ref{ssec:learning}. We plot the $Q_s(a)$ in the first iteration of RTDP where $s=[-4.3,33.8]$, and let random and BO in Algorithm~\ref{alg:bo} each pick 10 actions to
evaluate sequentially as shown in Fig~\ref{fig:boaction}~(a). We use the GP implementation and the default
Matern52 kernel implemented in the GPy module~\cite{gpy2014} and
optimize its kernel parameters every 5 selections. The first point for
both BO and Rand is fixed to be $a=0.0$. We observe that BO is able to
focus its action selections in the high-value region, and BO is also
able to explore informative actions if it has not found a good value
or if it has finished exploiting a good region (see selection
10). Random action selection wastes choices on regions that have
already been determined to be bad.
\begin{figure}[h]
\centering
\includegraphics[width=0.8\textwidth]{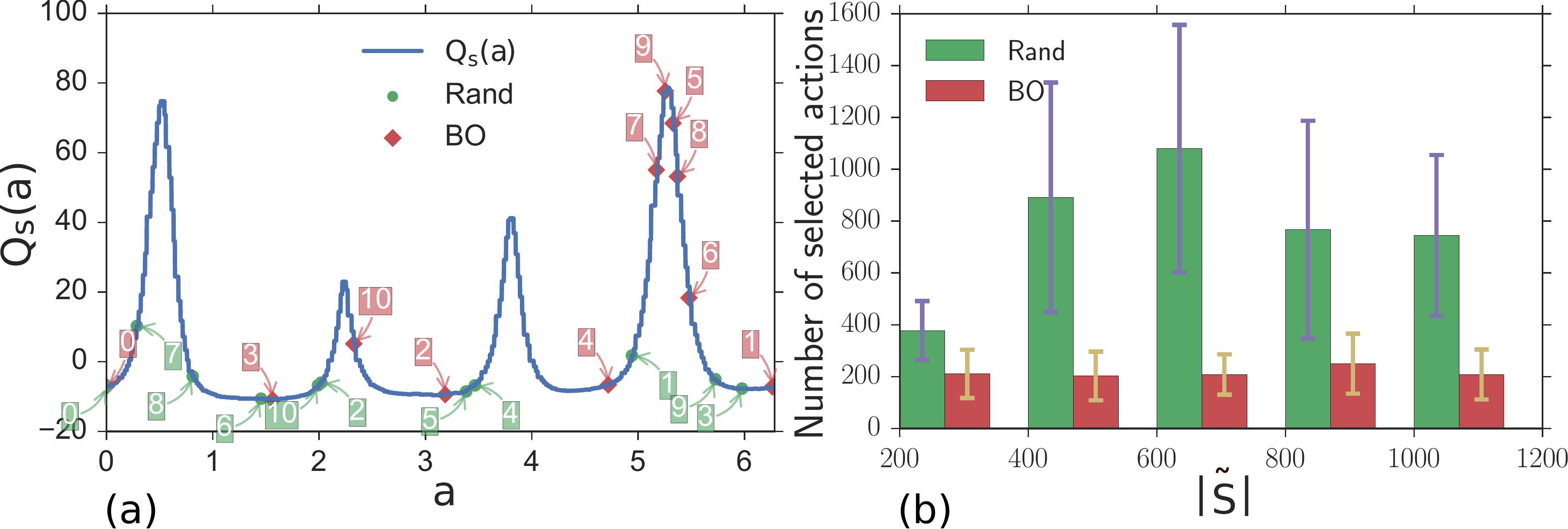}
        \caption{(a) We optimize $Q_s(a)$ with BO and Rand by sequentially sampling 10 actions. BO selects actions more strategically than Rand. (b) BO samples fewer actions than Rand in the pushing problem for all settings of $|\tilde S|$.}
        \label{fig:boaction}
\end{figure}

\begin{figure}[h]
\centering
\includegraphics[width=0.8\textwidth]{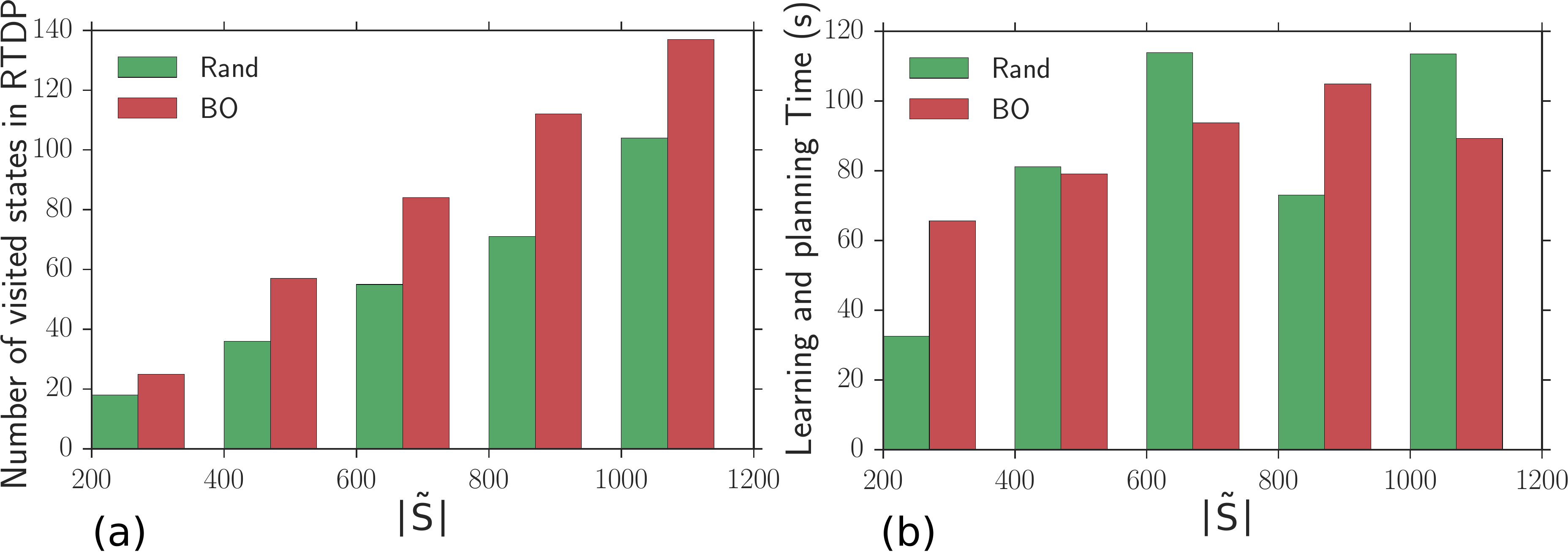}
        \caption{ (a) Number of visited states in RTDP.  Both of Rand and BO consistently focus on about 10\% states for planning. (b) Learning and planning time of BO and Rand.} 
%are          approximately the same.}
% Both of them manages to compute the policy within minutes.}%(a) average number of actions selected for each state. (b) Learning and planning time for the task. Because the computation in BO is higher than Rand, we allow Rand to select more actions. However, Rand was not able to achieve better performance than BO with more actions. We demonstrate here that BO is able to focus the actions on the good ones and achieve good reward with these actions.}
        \label{fig:pushtime}
\end{figure}
\begin{figure}[h]
        \centering
        \includegraphics[width=0.8\textwidth]{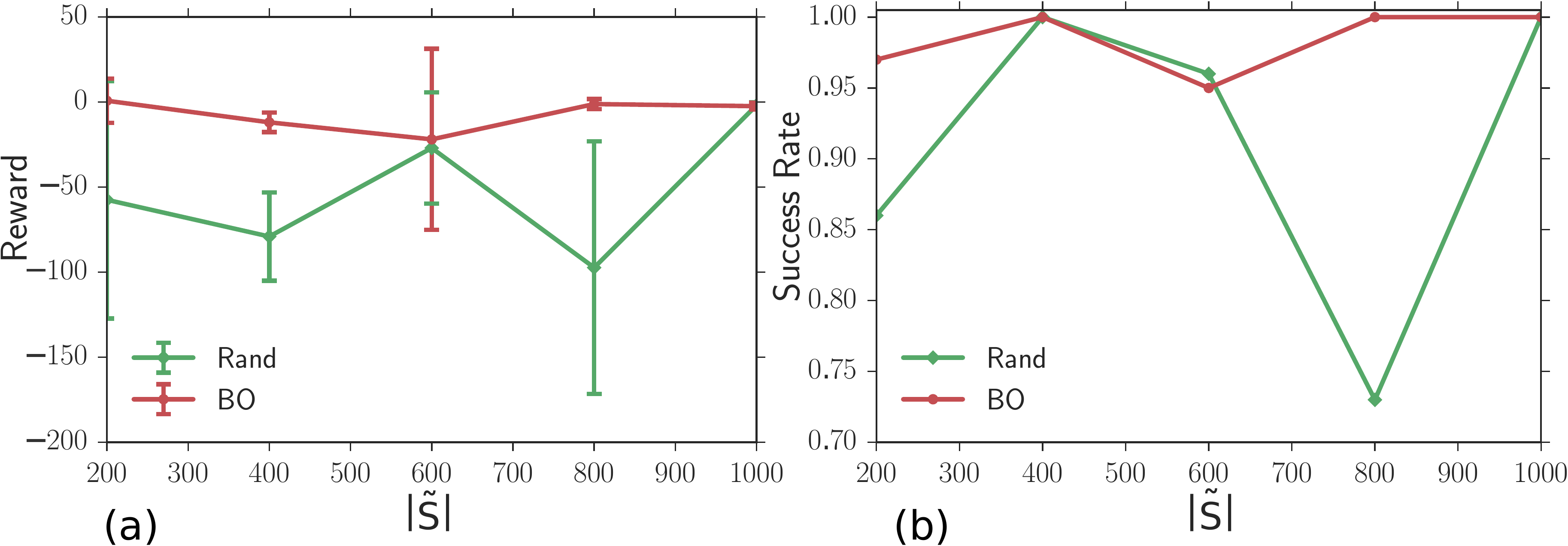}
        \caption{(a) Reward. (b) Success rate. BO achieves better
          reward and success rate, with many fewer actions and slightly more visited states.}
        \label{fig:pushreward} 
\end{figure}
Next we consider a more complicated problem in which the action is the high
level control of a pushing problem $a=(z, x, \Delta t)$,
$z\in[0,2\pi], x\in[-1.0,1.0], \Delta t\in[0.0,3.0]$ as illustrated in
Fig.~\ref{fig:push}. 
% $z$ is the direction of the push, $x$ is relative position of the
% pusher to the object,
% and $\Delta t$ is the amount of time for the
% push with direction $z$ and relative position
% $x$ at the start of the push.
The instantaneous reward is $-1$ for each free-space motion,
$-10$ for hitting an obstacle, and $100$ 
for reaching the goal; $\gamma=0.99$.
%  is the discount
% factor. %We use the sparse Gaussian process~\ref{} 
We collected $1.2\times 10^6$ data points of the form $(a, \Delta
  s)$ with $x$ and $\Delta t$ as variables in the Box2D simulator~\cite{box2d}  where noise
  comes from variability of the executed action. We make use of the fact
  that the object is cylindrical (with radius 1.0) to reuse data. 
  %by applying the transformation matrix $T(z)$. 
  An example of the
  distribution of $\Delta s $ given $a=(0.0,0.3,2.0)$ is shown
  in Fig.~\ref{fig:pushmodel}. 
\begin{figure}[h]
\centering
\includegraphics[width=0.7\textwidth]{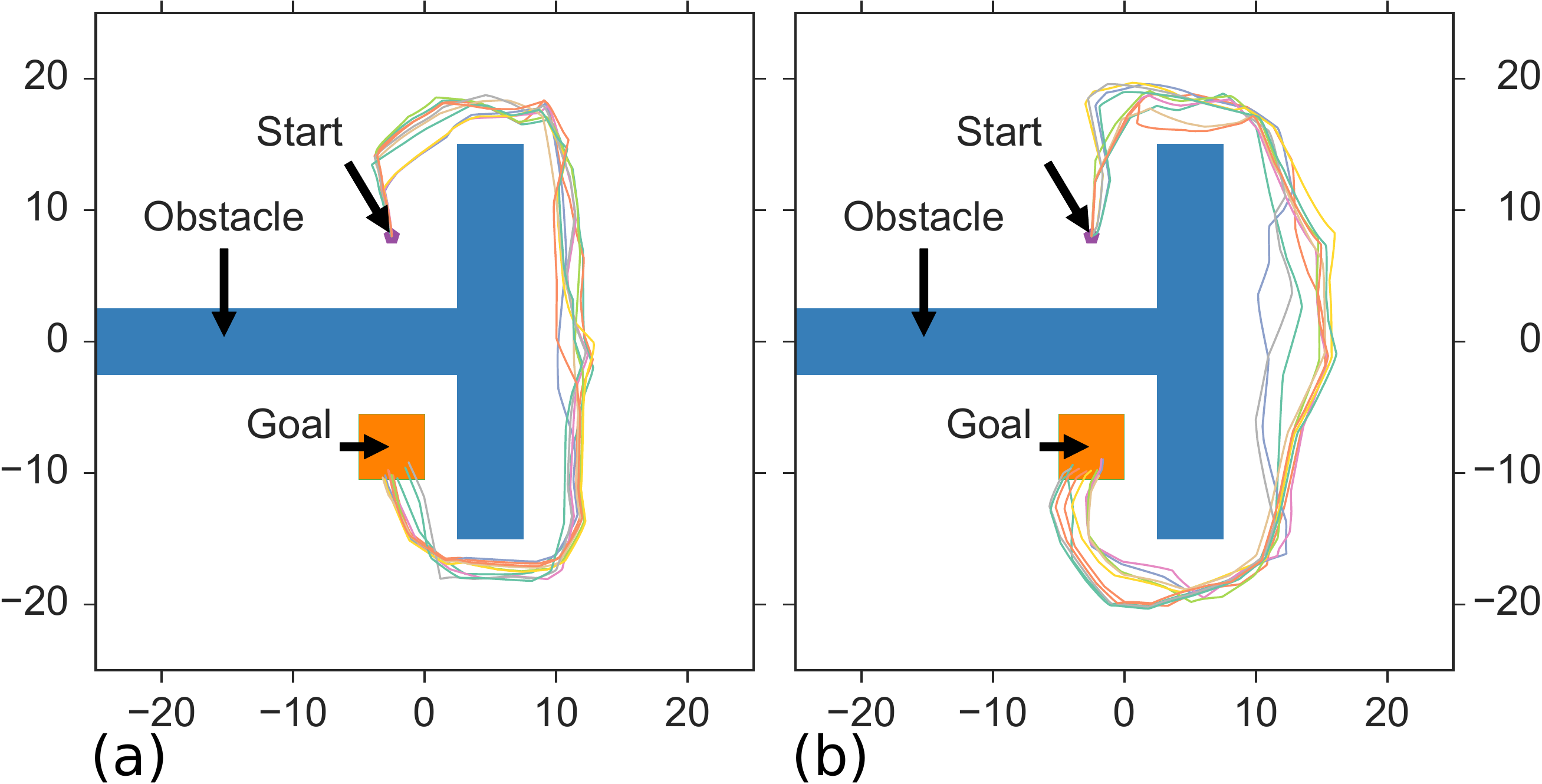}
        \caption{
          (a) 10 samples of trajectories generated via Rand with
          1000 states. (b) 10 samples of trajectories generated via BO
          with 1000 states. }  
        \label{fig:pushtraj}
\end{figure}

 We compare policies found by Rand and BO with the same set of sampled states ($|\tilde S|=200,400,600,800,1000$) within approximately the same
amount of total computation time.  They are both able to compute the
policy in $30\sim 120$ seconds, as shown in
Fig.~\ref{fig:pushtime}~(b). In more realistic domains, it is possible
that learning   the transition model will take longer and
dominate the action-selection computation. We simulate 100
trajectories in the Box2D simulator for each planned policy with a
maximum of 200 seconds. We show the result of the reward and success
rate in Fig.~\ref{fig:pushreward}, and the average
number of actions selected for visited states in
Fig.~\ref{fig:boaction}(b). In our simulations, BO consistently
performs approximately the same or better than Rand in terms of reward
and success rate while BO selects fewer actions than
Rand. We show 10 simulated trajectories for Rand and BO with
$|\tilde S|=1000$ in Fig.~\ref{fig:pushtraj}. 

From Fig.~\ref{fig:pushtime}~(a), it is not hard to see that RTDP successfully controlled the number of visited states to be only a small fraction of the whole sampled set of states. Interestingly, BO was able to visit slightly more states with RTDP and as a result, explored more possible states that it is likely to encounter during the execution of the policy, which may be a factor that contributed to its better performance in terms of reward and success rate in Fig.~\ref{fig:pushreward}. We did not compare with pure value iteration because the high computational cost of computing models for all the states made it infeasible.

\ALGNAME is able to compute models for only around 10\% of the sampled
states and about 200 actions per state. If we consider a naive grid
discretization for both action (3 dimension) and state (2 dimension)
with 100 cells for each dimension, the number of models we would have
to compute is on the order of $10^{10}$, compared to our approach,
which requires only $10^4$. 

%and an animation of the trajectory resulting from a BO policy in the supplementary material.

\hide{

\zw{The following results will change}

In Fig.~\ref{fig:borand}, we compare the optimization process of $U=\max_a Q(s_0,a)$ via BO and random selection (Rand). We generated four different set of states via Algorithm~\ref{alg:statesample} with $52, 63,  146, 247$ number of states %and $10,10,10,100$ initializations, 
respectively. The same random actions were selected for both BO and Rand as known data before the evaluations. Comparing to random selection, BO performs much better at maximizing $Q(s_0,a)$, especially when more states are generated (Fig.~\ref{fig:borand}(d)). Usually if more states are generated, it creates more local optima for $Q(s,a)$, which makes it harder for Rand to find the global maximum value. On the contrary, Bayesian optimization adapts to the function it is optimizing, and approaches the global optima much faster than Rand.

We also compare Rand and BO on the task of finding trajectories for stochastic pushing system. The pusher is a point robot with rectangle shape, and the object is a cylinder. 

todo: add illustration of pusher and object here

An illustration of the environment is in Fig~\ref{fig:botraj}, where the blue patches are the obstacles, the orange circle is the goal region $\mathcal G$, the purple pentagon is the start position, and the pink dots are sampled states using Algorithm~\ref{alg:statesample}. The number of interior states sampled is $|\tilde S| = 141, 181, 221, 261, 301, 341, 381, 421$, and the number of boundary states sampled is $600,800,1000,1200,1400,1600,1800,2000$ correspondingly. The world is a $100\times 100$ square, the reward of hitting any obstacle or exiting the world is -100, and the reward for hitting the goal is 100. For the optimization in Line~\ref{opt1} and Line~\ref{opt2} of Algorithm~\ref{alg:rtdp}, we use BO/Rand to select 40 actions to compute the transition models and then choose the one with the highest value $U$. The transition models are cached and used for future selections. When more states are added, the cache will be deleted since the transition models did not take into account the new states, and hence inaccurate. However, we save a second cache for the GMM models computed before, and save computational cost of computing the GMM again. In Fig.~\ref{fig:botraj} and Fig.~\ref{fig:rdtraj}, we plot the trajectories and the contour of U found by RTDP with BO/Rand. BO finds trajectories that are more robust, while Rand is better at escaping from local optima if it is lucky enough. However, Rand does not have a resistant behavior unless a lot of samples of action are allowed. 

After getting the policy, we test the policy of BO and Rand with Monte Carlo simulation of the task. We plot the results of reward and success rate in Fig.~\ref{fig:bordpolicy} and ..
\begin{table}[t]
\centering
%\small
\begin{tabular}{|c|c|c|c|c|c|c|c|c|}
\hline
$|\tilde S|$   &    $141$ &    $181$ &    $221$ &    $261$  &   $301$  &  $341$ &   $381$ &   $421$ \\
\hline\hline
\multicolumn{9}{|c|}{Reward}\\\hline
Rand     & $-27. 3\rpm 15.2$ & $\textbf{-30.6}\rpm 13.3$ & $-30.3\rpm 13.8$ & $-20.8\rpm 8.0$  &  $\textbf{5.7}\rpm 6.5$  & $5.5\rpm 6.4$ &  $2.7\rpm 7.2$ &  $2.5\rpm 7.1$ \\
BO       &  $\textbf{-1.0}\rpm3.9$ &  $-0.3\rpm4.9$ &   $0.6\rpm 4.3$ &   $1.8\rpm4.8$  &  $1.2\rpm 5.5$  & $4.1\rpm 1.6$ &  $3.8\rpm 1.3$ &  $\textbf{4.8}\rpm 3.3$ \\

\hline\hline
\multicolumn{9}{|c|}{Success Rate}\\\hline

Rand     &   0.04 &   0.03 &   0.02 &   0.06  &  \textbf{\color{blue}0.92}  & 0.86 &  0.78 &  0.69 \\

BO       &   0.42 &   0.52 &   0.56 &   0.63  &  0.58  & \textbf{\color{red}1.00} &  \textbf{\color{red}1.00} &  0.97 \\
\hline

\end{tabular} 
\vspace{5pt}
\label{tb:bord}
\caption{BO performs better than Rand as expected for Monte Carlo simulation of reward collecting and its success rate of arriving at the goal without colliding with any obstacle. However, although Rand is less stable and prones to have results that oscillate more than BO, it can still achieve a relatively good performance without sophisticated modeling of the Q values of actions with GP. We also notice that the highest success rate does not necessarily mean the reward is also the highest, but a high reward usually means that the success rate will not be too low. We show the trajectories of the highest and lowest rewards achieved by BO and Rand in Fig.~\ref{fig:botraj} and Fig.~\ref{fig:rdtraj}}

\end{table}
\begin{figure}
        \centering
        \includegraphics[width=0.5\textwidth]{figs/u0bo11_10}
        \hspace{-10pt}
        \includegraphics[width=0.5\textwidth]{figs/u0bo31_10}

        \includegraphics[width=0.5\textwidth]{figs/u0bo21_10}
        \hspace{-10pt}
        \includegraphics[width=0.5\textwidth]{figs/u0bo51_10}
        \caption{BO performs better than Rand at optimizing Q function for continuous action space. }
        \label{fig:borand}
\end{figure}

\begin{figure}
        \centering
        \includegraphics[width=1\textwidth]{figs/boi_rd_0_exp_1_usefake_0_limitpre_0_policy_29}
        \caption{Trajectories and $U$ contour of BO with $|\tilde S|=141$ (lowest reward) and $|\tilde S| = 421$ (highest reward). }
        \label{fig:botraj}
\end{figure}

\begin{figure}
        \centering
        \includegraphics[width=1\textwidth]{figs/boi_rd_1_exp_1_usefake_0_limitpre_0_policy_36}
        \caption{Trajectories and $U$ contour of Rand with $|\tilde S|=181$ (lowest reward) and $|\tilde S| = 301$ (highest reward). }
        \label{fig:rdtraj}
\end{figure}

\begin{figure}
        \centering
        \includegraphics[width=0.5\textwidth]{figs/boi_rd_0_exp_1_usefake_0_limitpre_0_usedata_0_reward}
        \hspace{-10pt}
        \includegraphics[width=0.5\textwidth]{figs/boi_rd_0_exp_1_usefake_0_limitpre_0_usedata_0_succrate}
        \caption{The policy of BO behaves better and more stable than the policy of Rand. }
        \label{fig:bordpolicy}
\end{figure}

\begin{figure}
\centering
        \includegraphics[width=0.8\textwidth]{figs/boi_rd_0_exp_1_usefake_0_limitpre_0_usedata_0_cachesize}
\label{fig:bordcache}
\caption{BO does not need as many Q function calls as Rand to achieve a better policy.}
\end{figure}

\hide{
\subsection{Effectiveness of RRT state sampling}
Compare RRTRTBOMDP and Grid RTBOMDP (or Grid BOMDP??)
\subsection{Effectiveness of RTDP}
Compare the MC value for the policy from RRTRTBODP and RRTBODP.

\section{Previous results}
200 sampled states in the inner part of $S$. 45 iterations in the outer loop of value iteration. $\gamma = 0.9$. $R(\text{goal}|\cdot, \cdot) = 10$, $R(\text{obstacle}|\cdot, \cdot) = -50$, for other states reward is 0. Goal state is (40,40).
\begin{figure}
        \centering
        \includegraphics[width=1.3\textwidth]{figs/policy}
        \hspace{20pt}
        \includegraphics[width=1.3\textwidth]{figs/vi}
        \caption{Policy and value function for sampled states.}
        \label{fig:gpexample}
\end{figure}

\begin{figure}
        \centering
        \includegraphics[width=\textwidth]{figs/adensity_before}
        \caption{Density estimation for policy before value iteration.}
        \label{fig:gpexample}
\end{figure}

\begin{figure}
        \centering
        \includegraphics[width=\textwidth]{figs/adensity_after}
        \caption{Density estimation for policy after value iteration.}
        \label{fig:gpexample}
\end{figure}
}
}
\hide{\section{Discussion}
In this section we clarify the differences between our method and iMDP. Our method builds on iMDP and makes significant improvements to overcome the practical challenges including non-reversible dynamics, non-Gaussian transition models, and the non-convex optimization for bellman updates in the continuous space. Beyond the fact that iMDP assumes a known model for the transition dynamics and we do not, there are three major algorithmic differences between our method and iMDP: (1) our planner is highly modular, which separates the state sampling module and the dynamic programming module, so that more flexible state sampling techniques could be used without assuming the backward transition model is given; while iMDP combine state sampling (one more state each iteration that requires backward transition model to compute) with bellman updates among the neighbors of the newly sampled state. (2) iMDP computes the maximum over actions via uniformly random sampling, assuming that the Bellman equation can be
   accurately solved in the limit of infinitely many action samples;
   in our case, we are concerned about practical performance, and so
   perform a more utility-driven action-sampling strategy via a variant of 
   Bayesian optimization. (3) iMDP uses value iteration while our method use RTDP. We make this choice because the model can be expensive to compute, and we
   would like to compute as few models for state-action pairs as
   possible. Because of the modularity of our approach, other types of heuristics can also be used to replace RTDP, e.g. bounded RTDP~\cite{mcmahan2005bounded,smith2006focused}. Other differences between our method and iMDP include the type of RRT
for state sampling (backward extension for iMDP and forward extension for our method) and the domain of
interest for the stochastic models and planning problem (single-modal
transitions, multiple starting states and single goal for iMDP and
multi-modal transitions, multiple start-goal pairs for our method).}
\section{Conclusion}

%An important class of robotics problems involves a vehicle or more
%complex system that has continuous unknown transition dynamics that
%are constant within free space, but are impacted by the presence of
%obstacles.  

An important class of robotics problems are intrinsically continuous in both state and action space, and may demonstrate non-Gaussian stochasticity in their dynamics. We have provided a framework to plan and learn effectively for these problems. We achieve efficiency by focusing on relevant subsets of state and action 
spaces, while retaining guarantees of asymptotic optimality.
%from data and using them to plan effective policies in complex domains
%with obstacles, and support multiple queries with different start and
%goal states, or even different obstacle maps. We achieve
%efficiency by focusing on relevant subsets of state and action
%spaces, but retain guarantees of asymptotic optimality.

%%% Local Variables:
%%% mode: latex
%%% TeX-master: t
%%% End:

% other ideas: instead of choose sid and assume states[sid] is the next state, we can recalculate the transition model of sid based on an actual sample rather than states[sid], but then cache is not usable anymore.

\bibliographystyle{plain}
%\bibpunct{(}{)}{;}{a}{,}{,}
\bibliography{IEEEabrv,refs}

%\section*{Acknowledgements}

%We gratefully acknowledge support from NSF CAREER award 1553284, NSF grants 1420927 and 1523767, from ONR grant N00014-14-1-0486, and from ARO grant W911NF1410433.  Any opinions, findings, and conclusions or recommendations expressed in this material are those of the authors and do not necessarily reflect the views of our sponsors. 
\end{document}